\title{Concentration Bounds for Optimized Certainty Equivalent Risk Estimation}
\author{
Ayon Ghosh \\
Department of Computer Science and Engineering\\
Indian Institute of Technology Madras\\
\texttt{cs21b013@smail.iitm.ac.in} 
 \And
Prashanth L.A.\\
Department of Computer Science and Engineering\\
Indian Institute of Technology Madras\\
\texttt{prashla@cse.iitm.ac.in} 
 \AND
Krishna Jagannathan \\
Department of Electrical Engineering\\
Indian Institute of Technology Madras\\
\texttt{krishnaj@ee.iitm.ac.in} 
}
\begin{document}

\maketitle              

\begin{abstract}
We consider the problem of estimating the Optimized Certainty Equivalent (OCE) risk from independent and identically distributed (i.i.d.) samples. For the classic sample average approximation (SAA) of OCE, we derive mean-squared error as well as concentration bounds (assuming sub-Gaussianity). Further, we analyze an efficient stochastic approximation-based OCE estimator, and derive finite sample bounds for the same. To show the applicability of our bounds, we consider a risk-aware bandit problem, with OCE as the risk. For this problem, we derive bound on the probability of mis-identification. Finally, we conduct numerical experiments to validate the theoretical findings.
\end{abstract}

\section{Introduction}

A major consideration in financial and clinical applications is the quantification of the risk associated with future random outcomes. In the pioneering framework of Markovitz, the variance of the underlying random variable is considered as a measure of risk, and the objective is to maximize the expected value subject to a constraint on the risk, or to maximize an affine function of the mean and variance \cite{markowitz1952portfolio}. Subsequently, another landmark paper \cite{artzner1999coherent} proposed an axiomatic approach to risk, by imposing four properties to be satisfied by the risk measure. 

Specifically, for a r.v. $X$ and a real valued function $\rho$, \cite{artzner1999coherent} imposes the properties of being positively homogeneous, sub-additive, translation invariant and monotone for a risk measure $\rho(X)$. A risk measure that satisfies all these properties is called a \textit{coherent} risk measure. In \cite{follmer2002convex}, the authors relaxed the conditions of sub-additivity and positive homogeneity, and replaced them with a weaker property of convexity, i.e.,
\begin{equation}
    \rho(\lambda X + (1-\lambda)Y) \leq \lambda\rho(X) + (1-\lambda)\rho(Y), \quad \forall \lambda \in [0,1],
    \label{eq:convexity}
\end{equation}
and such a risk measure is said to be \textit{convex}. 

Optimized Certainty Equivalent (OCE) is a family of convex risk measures, introduced  in \cite{ben1986expected} 
 and explored further in \cite{bental2007oce}. Qualitatively, OCE is a class of risk measures which serves as an indicator of the investor's risk appetite, particularly capturing the optimal allocation of resources/losses between the present and the future. Suppose the total loss that an investor could incur in the future is a random variable $X$. The investor has the option to allocate a fraction of their uncertain losses, denoted by $\xi$, to the present. Consequently the present value of losses under a disutility function $\phi$ then becomes $\xi + \EE{\phi(X - \xi)}$. The optimal $\xi$ i.e. the infimum over all possible allocations then gives us the OCE risk of the investor. We shall denote this optimal $\xi$ by $e^{*}$ in this paper. 

Under suitable choices of the disutility function $\phi$, OCE risk encompasses a wide range of risk measures, one of which is the widely used risk measure Conditional Value at Risk (CVaR)\cite{rockafellar2000optimization}. Given a random variable $X$ and a level $\alpha \in (0,1)$, define the VaR as $v_{\alpha}(X) = \inf_{\xi} \left\{ \mathbb{P}[X \leq \xi] \geq \alpha \right\}$. Then the CVaR  $c_{\alpha}(X) = v_{\alpha}(X) + \frac{1}{1-\alpha}\mathbb{E}[X -v_{\alpha}(X)]^{+}$. It can be shown that this is just a special case of OCE risk measure with the disutility function $u(X) =  \frac{1}{1-\alpha}(X)^{+}.$, and $e^{*}$ replaced with $v_{\alpha}(X)$.

\paragraph{Our contributions.}

In this paper, we consider the problem of estimating the OCE risk from independent and identically distributed (i.i.d.) samples of the loss distribution. First, we consider a straightforward Sample Average Estimator for the OCE risk, and bound its mean-squared error. Next, we derive a concentration bound for the SAA estimator when the underlying loss distribution is sub-Gaussian, and the disutility function is strongly convex and smooth. This OCE risk concentration bound enjoys a sub-exponential decay. We illustrate the applicability of our concentration bound in a multi-armed bandit setting with a `Best-OCE-arm' identification problem.

Next, we consider a `streaming' setting wherein the samples from the underlying loss distribution are available one-at-a-time. For this scenario, the sample average estimator is ineffective, because it requires recomputation for each sample. On the other hand, an iterative estimator that updates one step at a time upon receiving each fresh sample is better suited for such a streaming setting. We propose a stochastic approximation-based estimation procedure for OCE risk estimation. We then derive finite-sample bounds for the mean-squared error of the iterative OCE risk estimation procedure.

\paragraph{Related work.}
In recent years, estimation of risk measures from i.i.d. samples has received a lot of research attention, cf. \cite{kagrecha2019distribution,pandey2021estimation,thomas2019concentration,dunkel2010stochastic,bhat2019concentration,prashanth2019concentration,prashanth2016cumulative,wang2010deviation,brown2007large,williamson2020cvarConc,lee2020oce}. The majority of prior works consider CVaR estimation and derive concentration bounds usually under a sub-Gaussianity assumption. 

For OCE risk estimation, bounds are available in \cite{brown2007large,prashanth2022wasserstein}. The former reference considers distributions with bounded support, while the latter includes sub-Gaussian and sub-exponential distributions. In \cite{hamm2013stochastic}, the authors study a stochastic approximation-based procedure for OCE risk estimation.
In comparison to these closely related works, we remark the following:
(i) Unlike \cite{brown2007large}, our OCE risk estimation bounds are for unbounded albeit sub-Gaussian distributions;
(ii) In \cite{prashanth2022wasserstein}, the authors employ a Wasserstein distance based approach for deriving OCE risk concentration bounds for Lipschitz disutility functions. Our present work allows for smooth disutility functions, which cover mean-variance risk measure as an important special case. Moreover, our proof is direct, while they use a unified approach to handle several risk measures that satisfy a continuity criterion. The flip side with their approach is that the constants in the concentration bounds are conservative. Additionally, we derive a concentration bound for the OCE risk minimizer;
(iii) In \cite{hamm2013stochastic}, the authors provide an asymptotic rate result for their OCE risk estimator, while we quantify the rate of convergence through bounds in expectation as well as high probability, in the non-asymptotic regime.

\section{OCE Risk Measure}

\begin{definition}
    \label{def:OCE}
For a random variable (r.v.) $X$, the OCE risk $\oce(X)$ with a disutility function $\phi$ is defined by
    \begin{align}
        \oce(X) := \inf_{\xi} \left\{\xi + \EE{\phi(X - \xi)}\right\}.\label{eq:OCE-def}
    \end{align}
    We denote the minimizer of OCE risk by $e^{*}$. 
\end{definition}

As mentioned earlier, OCE risk accommodates a wide class of popular risk measures, for suitably chosen disutility fucntion $\phi.$ Some common examples for the disutility function and their corresponding OCE risk expressions are given in Table \ref{tab:OCE_risks}.
\begin{table}[htbp]
\centering
\caption{Examples of OCE risks and their corresponding minimizers.}
\label{tab:OCE_risks}
\begin{tabular}{@{}c|c|c|c@{}}
\toprule
\textbf{Risk measure} & \textbf{Disutility function} & \textbf{OCE risk minimizer} & \textbf{OCE risk} \\ 
\midrule
Expected loss & \( \varphi(t) = t \) & \( e^{*} = x, \quad \forall x \in \mathbb{R} \)  & \( \oce(X) = \mathbb{E}[X] \) \\[1ex] 
Entropic risk & \( \varphi_{\gamma}(t) = \frac{1}{\gamma} (e^{\gamma t} - 1) \) & \( e^{*} = \frac{\ln(\mathbb{E}[e^{\gamma X}])}{\gamma} \) & \( \oce(X) = \frac{\ln(\mathbb{E}[e^{\gamma X}])}{\gamma} \)  \\[1.5ex] 
Mean-variance & \( \varphi_c(t) = t + ct^2 \) & \( e^{*} = \mathbb{E}[X] \) & \( \oce(X) = \mathbb{E}[X] + c(\text{Var}(X)) \) \\[1ex] 
Conditional & \multirow{2}{*}{\( \varphi_{\alpha}(t) = \frac{1}{1-\alpha}[t]^+ \)} & \multirow{2}{*}{\( e^{*} = v_{\alpha}(X) \)} & \multirow{2}{*}{\( \oce(X) = c_{\alpha}(X) \)} \\ 
Value-at-Risk &&&\\
\bottomrule
\end{tabular}
\end{table}

\begin{proposition}
If the disutility function \( \phi : \mathbb{R} \rightarrow \mathbb{R}^{+} \cup \{0\} \), which is nondecreasing, closed, and convex satisfies \( \phi(0) = 0 \) and $\phi'(0) = 1$ (see \cite[Defn. 2.1]{bental2007oce}), then the OCE risk, $\oce(X)$ is a convex risk measure (see \cite{bental2007oce} for derivations).
\label{prop:oce_convex_conditions}
\end{proposition}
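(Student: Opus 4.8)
The plan is to verify that $\oce$ satisfies the three defining properties of a convex risk measure—translation invariance, monotonicity, and convexity in the sense of \eqref{eq:convexity}—working directly from the definition \eqref{eq:OCE-def}, with each property drawing on exactly one structural hypothesis on $\phi$. Before that, I would confirm that $\oce(X)$ is a well-defined finite number, which is precisely where the normalization $\phi(0)=0$ and $\phi'(0)=1$ is used. Since $\phi$ is convex, it lies above its tangent at the origin, so $\phi(t)\ge \phi(0)+\phi'(0)\,t = t$ for every $t\in\mathbb{R}$. Hence $\xi+\EE{\phi(X-\xi)}\ge \xi+\EE{X-\xi}=\EE{X}$ for all $\xi$, which bounds the infimum below by $\EE{X}$; an upper bound comes from evaluating the objective at any fixed feasible point (e.g.\ $\xi=0$), so under the standing integrability assumption $\oce(X)\in\mathbb{R}$. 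As a by-product, taking $X$ constant shows $\oce(0)=0$, the usual normalization.

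Next I would dispatch the two easy properties. For \emph{translation invariance}, the change of variable $\eta=\xi-m$ gives, for constant $m$, that $\oce(X+m)=\inf_{\xi}\{\xi+\EE{\phi(X+m-\xi)}\}=\inf_{\eta}\{\eta+m+\EE{\phi(X-\eta)}\}=m+\oce(X)$. For \emph{monotonicity}, I would use that $\phi$ is nondecreasing: if $X\le Y$ almost surely then $\phi(X-\xi)\le\phi(Y-\xi)$ pointwise, so the two objectives are ordered for every fixed $\xi$, and taking the infimum over $\xi$ on both sides preserves the inequality, yielding $\oce(X)\le\oce(Y)$.

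The one step that needs care is \emph{convexity}, which rests on the convexity of $\phi$. The cleanest conceptual route is to observe that $(X,\xi)\mapsto \xi+\EE{\phi(X-\xi)}$ is jointly convex—$\xi$ is linear, $X-\xi$ is affine, $\phi$ is convex, and expectation preserves convexity—and that partial minimization of a jointly convex function over one coordinate yields a convex function of the remaining one. To avoid any appeal to attainment, I would instead argue directly: fix $\lambda\in[0,1]$ and near-optimal allocations $\xi_X,\xi_Y$ for $X,Y$, set $\xi=\lambda\xi_X+(1-\lambda)\xi_Y$, and apply convexity of $\phi$ to $\lambda(X-\xi_X)+(1-\lambda)(Y-\xi_Y)$ to bound $\oce(\lambda X+(1-\lambda)Y)$ above by $\lambda[\xi_X+\EE{\phi(X-\xi_X)}]+(1-\lambda)[\xi_Y+\EE{\phi(Y-\xi_Y)}]$; letting the allocations approach their optima gives \eqref{eq:convexity}.

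I expect the main obstacle to be not any single inequality but the bookkeeping around well-definedness and attainment: checking that the infimum is finite (handled by the tangent-line bound $\phi(t)\ge t$) and, if one additionally wants the minimizer $e^{*}$ to exist, invoking the \emph{closed} (lower semicontinuity) hypothesis on $\phi$ together with coercivity of $\xi\mapsto\xi+\EE{\phi(X-\xi)}$—coercivity again following from $\phi(t)\ge t$ and the growth of $\phi$. Since attainment is not required for the three axioms themselves, I would keep that as a separate remark rather than entangle it with the core verification.
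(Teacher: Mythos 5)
Your proof is correct, but note that the paper does not actually prove this proposition at all: it is stated with a pointer to \cite{bental2007oce}, where the derivations live. Your argument is essentially a self-contained reconstruction of that standard derivation, and it holds up: the tangent-line bound $\phi(t)\ge \phi(0)+\phi'(0)t=t$ gives finiteness (and $\oce(0)=0$); translation invariance follows from the substitution $\eta=\xi-m$ with no hypotheses on $\phi$; monotonicity uses only that $\phi$ is nondecreasing; and convexity, the only step with real content, follows from convexity of $\phi$ applied at the interpolated allocation $\xi=\lambda\xi_X+(1-\lambda)\xi_Y$. Working with near-optimal $\xi_X,\xi_Y$ rather than minimizers is the right move, since the three axioms do not require the infimum in \eqref{eq:OCE-def} to be attained, and you correctly quarantine attainment of $e^{*}$ (which needs closedness of $\phi$ plus a coercivity argument that $\phi(t)\ge t$ alone does not supply --- one needs super-linear growth of $\phi$ on the right, as the expected-loss example $\phi(t)=t$ shows, where every $\xi$ is a minimizer) as a separate issue from the verification of the axioms. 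The one thing your write-up buys beyond the paper's citation is precisely this explicit bookkeeping; the one thing to be careful about if you fold it into the paper is that the attainment remark should stay advisory, since the paper elsewhere (e.g.\ in defining $e^{*}$ and $\hat e_n$) implicitly assumes minimizers exist under its stronger Assumptions \ref{ass:phi-stronglyconvex}--\ref{ass:DCT}, where strong convexity does give coercivity.
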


It can be shown (See ~\cite[Theorem 2.1]{bental2007oce}) that the OCE risk measure under the conditions of \Cref{prop:oce_convex_conditions}  satisfies the following properties, in addition to convexity:
(a) Translation invariance: \( \oce(X + c) = \oce(X) + c, \forall c \in \mathbb{R} \);
(b) Consistency: \( \oce(c) = c \), for any constant \( c \in \mathbb{R} \); and
(c) Monotonicity: Let \( Y \) be any random variable such that \( X(\omega) \leq Y(\omega), \forall \omega \in \Omega \). Then, \( \oce(X) \leq \oce(Y) \).

\section{OCE Risk Estimation and Concentration}
\paragraph{OCE Risk Estimation.}
Let $\{X_1,\ldots,X_n\}$ denote $n$ i.i.d. samples from the distribution $F$ of $X$. Using these samples, we estimate $\oce(X)$ as follows:
\begin{align}
    \label{def:OCE_n}
    \ocen := \inf_{\xi} \left\{ \xi + \sum_{i=1}^{n} \frac{\phi(X_{i} - \xi)}{n} \right\}.
\end{align}
We denote the minimizer of \eqref{def:OCE_n} by $\hat{e}_{n}$. For example, in the case of CVaR estimation as discussed in \cite{prashanth2020concentration}, $\hat{e}_{n}$ is just $\hat{v}_{n,\alpha} = X_{[\lceil(1-\alpha)\rceil]}$, with $\{X_{[i]}\}_{i=1}^{n}$ being the order-statistics for $\{X_i\}_{i=1}^{n}$.

For the $n$-sample, the empirical distribution function (EDF) is defined 
by 
\begin{equation}
    F_n(x) = \frac{1}{n}\sum_{i=1}^{n} \mathbb{I}\{X_{i} \leq x\},\,\, \forall x.
\end{equation}
The OCE risk estimator defined above can be seen as the OCE risk applied to a r.v., say $Z_n$, with distribution $F_n$, i.e.,  $\ocen = oce(Z_{n})$.

\paragraph{Useful expressions for $e^*$ and $\hat e_n$.}
We differentiate \eqref{eq:OCE-def} and using the fact that $e^*$ is the OCE risk minimizer, we obtain 
\begin{align}
\frac{d}{d\xi} \Bigl( \xi + \mathbb{E}[\phi(X-\xi)] \Bigr) = 1 + \mathbb{E}\left[\frac{d}{d\xi}\bigl(\phi(X-\xi)\bigr)\right] = 0 \Rightarrow \mathbb{E}[\phi'(X-e^{*})] = 1.\label{eq:phi-diff}
\end{align}
where the second equality follows from DCT (see \cite[Theorem 4.6.3]{durrett2019probability}) via \Cref{ass:DCT}. 

Let $\hat{e}_{n}$ be the infimum for $\ocen$. Then, by arguments similar to those leading to \eqref{eq:phi-diff}, we obtain
  \begin{equation}
    \frac{1}{n} \sum_{i=1}^{n} \phi'(X_i - \hat{e}_{n}) = 1.\label{eq:phin-diff}
  \end{equation}
For deriving the OCE risk estimation bounds, we require assumptions on the tail of the underlying distribution. Two popular tail assumptions are sub-Gaussian and sub-exponential, which are formalized below.
\begin{definition}[\textbf{Sub-Gaussian distribution}]
    A r.v. $X$ with mean $\mu = \mathbb{E}[X]$ is sub-Gaussian if there is a positive parameter $\sigma$ such that
$
\mathbb{E}[e^{\lambda(X-\mu)}] \leq e^{\frac{\sigma^2 \lambda^2}{2}}, \,\, \forall \lambda. 
$
\end{definition}

\begin{definition}[\textbf{Sub-exponential distribution}]
    A r.v. $X$ with mean $\mu = \mathbb{E}[X]$ is sub-exponential if there are non negative parameters $(\nu, b)$ such that
$
\mathbb{E}[e^{\lambda(X-\mu)}] \leq e^{\frac{\nu^2 \lambda^2}{2}}, \,\, \forall |\lambda| < \frac{1}{b}.
$
\end{definition}

\paragraph{Bounds for OCE risk estimation.}
For deriving mean-squared error (MSE) and concentration bounds for OCE risk estimation, we make the following assumptions.
\begin{assumption}
\label{ass:phi-stronglyconvex}
    The function $\phi$ is $\mu$-strongly convex.
\end{assumption}

\begin{assumption}
\label{ass:phi-smooth}
    The function $\phi$ is $L$-smooth, i.e., 
$
\left| \phi(y) - \left( \phi(x) + \phi'(x)(y - x) \right) \right| \leq \frac{L}{2} (x - y)^2, \forall x,y \in \R.
$
 
\end{assumption}

\begin{assumption}
\label{ass:DCT}
    The function $\phi'$ is continuously differentiable, and the collection of random variables $\{\phi''(X - t) : t \in \mathbb{R}\}$ is uniformly integrable.
\end{assumption}

\begin{assumption}
\label{ass:gaussian}
    The r.v. $X$ is sub-Gaussian
     with parameter $\sigma$.
\end{assumption}
We now comment on the assumptions made above.
\Cref{ass:phi-stronglyconvex} is required for providing MSE/concentration bounds for estimation of OCE risk minimizer $e^*$. If the function $\phi$ is convex but not strongly convex, then it is difficult to bound $\hat e_n - e^*$, since the function could have a arbitrarily wide plateau around $e^*$. For the special case of VaR, such an assumption has been made for obtaining concentration bounds in \cite{prashanth2019concentration}. In this case \Cref{ass:phi-stronglyconvex} translates to a strictly increasing distribution, to ensure a VaR estimate can concentrate as the distribution is not flat around VaR.

In \cite{prashanth2022wasserstein}, the authors assume the disutility function $\phi$ is Lipschitz for deriving a concentration bound for OCE risk estimation. Such an assumption is restrictive since it disallows a mean-variance risk measure via OCE risk, as in some example above.  In contrast, \Cref{ass:phi-smooth} imposes a smoothness condition on $\phi$, in turn allowing a risk measure with mean-variance tradeoff.

\Cref{ass:DCT} is a technical assumption that ensure certain integrals are finite, allowing an application of the dominated convergence theorem to interchange expectation and differentiation operators. For the case of r.v.s with unbounded support, such an assumption has been made earlier in the context of another risk measure, see \cite{gupte2023optimization}.

An assumption on the tail of the underlying distribution is usually made for deriving concentration bounds, cf. \cite{prashanth2019concentration} for CVaR. In \Cref{ass:gaussian}, we impose a sub-Gaussianity requirement on the underlying distribution.


\todop{Discuss how L-smoothness is the "breaking point" for sub-exponentiality and how anything beyond it will not hold and MGF integral will diverge. }

\paragraph{Bounds for estimation of OCE risk minimizer $e^*$.}
The first result that we present is a mean-squared error bound on the estimator $\hat e_n$ of the OCE risk minimizer $e^*$.
\begin{theorem}
\label{thm:mse-estar}
    Suppose \Crefrange{ass:phi-stronglyconvex}{ass:DCT} hold. Further, assume $X$ has a finite second moment. Then, we have the following bound for $\hat{e}_{n}$, which is the minimizer of the empirical OCE risk defined in \eqref{def:OCE_n}: 
\begin{equation}
    \EE{(\hat{e}_{n} - e^{*})^2} \leq \frac{ L^2\left( (e^{*})^2 + \EE{X^2}\right) - 2e^{*}\EE{X} }{n\mu^2},
\end{equation}
where $L$ is the smoothness parameter and $\mu$ is the strong-convexity parameter of the disutility function $\phi$, and $e^*$ is the OCE risk minimizer.
\end{theorem}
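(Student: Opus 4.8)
The plan is to reduce the mean-squared error of $\hat e_n$ to the variance of an empirical average, and then control that variance using the smoothness of $\phi$. The two ingredients I would lean on are the first-order optimality conditions \eqref{eq:phi-diff} and \eqref{eq:phin-diff}, namely $\mathbb{E}[\phi'(X - e^*)] = 1$ and $\frac{1}{n}\sum_{i=1}^n \phi'(X_i - \hat e_n) = 1$. Subtracting these and inserting $\phi'(X_i - e^*)$ yields the identity
\begin{equation}
\frac{1}{n}\sum_{i=1}^n \bigl(\phi'(X_i - \hat e_n) - \phi'(X_i - e^*)\bigr) = \mathbb{E}[\phi'(X - e^*)] - \frac{1}{n}\sum_{i=1}^n \phi'(X_i - e^*),\nonumber
\end{equation}
which isolates the estimation error on the left and a centered empirical average on the right.

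The crux of the argument --- and the step I expect to be the main obstacle --- is converting the left-hand side into a quantity proportional to $(\hat e_n - e^*)^2$. Here I would invoke \Cref{ass:phi-stronglyconvex}: $\mu$-strong convexity of $\phi$ makes $\phi'$ $\mu$-strongly monotone, so that $(\phi'(u) - \phi'(v))(u - v) \ge \mu (u-v)^2$ for all $u,v$. Applying this termwise with $u = X_i - \hat e_n$ and $v = X_i - e^*$ (so that $u - v = -(\hat e_n - e^*)$) and multiplying the displayed identity by $(\hat e_n - e^*)$, the left-hand side is bounded above by $-\mu(\hat e_n - e^*)^2$, while the right-hand side equals the centered empirical average times $(\hat e_n - e^*)$. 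Rearranging and applying $ab \le |a|\,|b|$ gives the key one-sided bound
\begin{equation}
\mu\,\lvert \hat e_n - e^* \rvert \;\le\; \Bigl\lvert \tfrac{1}{n}\sum_{i=1}^n \phi'(X_i - e^*) - \mathbb{E}[\phi'(X - e^*)] \Bigr\rvert .\nonumber
\end{equation}
I would also remark that strong convexity guarantees $\hat e_n$ is well defined (unique), and that \Cref{ass:DCT} legitimizes the optimality condition for $e^*$ on unbounded support, so the manipulations above are valid.

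To finish, I would square both sides, take expectations, and use that the $X_i$ are i.i.d., so the variance of the empirical average contracts by $1/n$:
\begin{equation}
\mathbb{E}\bigl[(\hat e_n - e^*)^2\bigr] \;\le\; \frac{1}{\mu^2}\,\mathrm{Var}\!\Bigl(\tfrac{1}{n}\sum_{i=1}^n \phi'(X_i - e^*)\Bigr) \;=\; \frac{\mathrm{Var}\bigl(\phi'(X - e^*)\bigr)}{n\mu^2}.\nonumber
\end{equation}
The final step is to bound $\mathrm{Var}(\phi'(X - e^*))$ via \Cref{ass:phi-smooth}. The clean observation is that $\mathbb{E}[\phi'(X - e^*)] = 1 = \phi'(0)$ (recall $\phi'(0)=1$ from \Cref{prop:oce_convex_conditions}), so the variance equals $\mathbb{E}[(\phi'(X - e^*) - \phi'(0))^2]$; since $L$-smoothness makes $\phi'$ $L$-Lipschitz, this is at most $L^2\,\mathbb{E}[(X - e^*)^2] = L^2\bigl((e^*)^2 - 2e^*\mathbb{E}[X] + \mathbb{E}[X^2]\bigr)$, which --- modulo the placement of $L^2$ on the cross term in the stated expression --- is the claimed bound, and requires only the finite second moment of $X$.
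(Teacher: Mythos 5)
Your proposal is correct and follows essentially the same route as the paper's proof: subtracting the two first-order conditions, using $\mu$-strong convexity to obtain $\mu|\hat e_n - e^*| \le \bigl|\frac{1}{n}\sum_i \phi'(X_i - e^*) - \mathbb{E}[\phi'(X-e^*)]\bigr|$ (the paper does this via a two-case sign analysis, you via strong monotonicity of $\phi'$ --- the same argument in a cleaner package), then squaring, exploiting i.i.d.\ cancellation of cross terms, and bounding $\mathrm{Var}(\phi'(X-e^*)) = \mathbb{E}[(\phi'(X-e^*)-\phi'(0))^2] \le L^2\mathbb{E}[(X-e^*)^2]$ via $L$-Lipschitzness of $\phi'$. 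Your closing remark is also consistent with the paper: its own derivation produces $-2L^2 e^*\mathbb{E}[X]$ on the cross term, so the missing $L^2$ in the theorem statement is indeed a typo there, not a gap in your argument.
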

\begin{proof}
See Appendix \ref{proof:thm:mse-estar}
\end{proof}
The mean-squared error bound in the result above is useful in bounding the error in OCE estimation owing to the following relation:
\[|\ocen - \oce(X)| 
\leq \frac{3L\Bigl(\hat{e}_{n} - e^{*} \Bigr)^{2}}{2} + \frac{ \sum_{i=1}^{n} \left|\phi(X_{i} - e^{*}) - E [ \phi(X_{i} - e^{*}) ]\right| }{n}. \]

Under an additional sub-Gaussianity assumption, we present below a concentration bound for estimation of OCE risk minimizer $e^*$.
\begin{theorem}
\label{thm:estar-conc}
Suppose \Crefrange{ass:phi-stronglyconvex}{ass:gaussian} hold. Then we have
\begin{align}
    P\biggl[ \bigl| \hat{e}_{n}- e^{*}\bigr| \geq \epsilon \biggr] \leq 2\expo{-\frac{n\mu^2\epsilon^2}{8L^2\sigma^2}},\label{eq:estar-concentration}  
\end{align}
where the quantities $L,e^*,\mu$ are as defined in \Cref{thm:mse-estar}, while $\sigma^2$ is the sub-Gaussianity parameter of $X$.
\end{theorem}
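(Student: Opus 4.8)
The plan is to reduce the concentration of $\hat{e}_n$ around $e^*$ to the concentration of an empirical average of i.i.d.\ centered sub-Gaussian terms, by exploiting the two first-order optimality conditions together with strong convexity. First I would use \Cref{ass:phi-stronglyconvex}, which (via \Cref{ass:DCT}) gives $\phi''(\cdot)\ge\mu$, so that the empirical map $H_n(\xi):=\frac{1}{n}\sum_{i=1}^n\phi'(X_i-\xi)$ is strictly decreasing with $H_n'(\xi)\le-\mu$ for every $\xi$. Since $H_n(\hat{e}_n)=1$ by \eqref{eq:phin-diff} and $\mathbb{E}[\phi'(X-e^*)]=1$ by \eqref{eq:phi-diff}, a mean-value argument yields $|H_n(e^*)-1|=|H_n(e^*)-H_n(\hat{e}_n)|\ge\mu\,|\hat{e}_n-e^*|$. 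Consequently $\{|\hat{e}_n-e^*|\ge\epsilon\}\subseteq\{|H_n(e^*)-1|\ge\mu\epsilon\}$, and it suffices to bound the probability that the empirical average $H_n(e^*)=\frac{1}{n}\sum_{i=1}^n\phi'(X_i-e^*)$ deviates from its mean $1$ by at least $\mu\epsilon$.

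The second step is to show that each summand $Y_i:=\phi'(X_i-e^*)-1$ is centered and sub-Gaussian. Here I would invoke \Cref{ass:phi-smooth}, which makes $\phi'$ globally $L$-Lipschitz, so that $x\mapsto\phi'(x-e^*)$ is an $L$-Lipschitz, nondecreasing function of $X$, together with the sub-Gaussianity of $X$ from \Cref{ass:gaussian}. The cleanest route I foresee is a symmetrization argument: with $X'$ an independent copy of $X$, Jensen's inequality gives $\mathbb{E}[e^{\lambda Y_1}]\le\mathbb{E}[e^{\lambda(\phi'(X-e^*)-\phi'(X'-e^*))}]$, and by monotonicity of $\phi'$ the increment $\phi'(X-e^*)-\phi'(X'-e^*)$ shares the sign of $X-X'$ while being bounded in magnitude by $L|X-X'|$. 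Writing the increment as $\theta\,(X-X')$ with $\theta\in[\mu,L]$ the mean-value slope, the symmetry of $X-X'$ lets me pass to $\cosh$ and use the pointwise bound $\cosh(\lambda\theta(X-X'))\le\cosh(\lambda L(X-X'))$; the sub-Gaussianity of $X$ then controls $\mathbb{E}[\cosh(\lambda L(X-X'))]$ and shows that $Y_1$ is sub-Gaussian with parameter of order $L\sigma$.

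Finally, since $Y_1,\dots,Y_n$ are i.i.d., centered and sub-Gaussian with parameter $2L\sigma$, their average $\frac{1}{n}\sum_{i=1}^n Y_i=H_n(e^*)-1$ is sub-Gaussian with variance proxy scaled by $1/n$, and a standard Chernoff/Hoeffding-type tail bound gives $\mathbb{P}[|H_n(e^*)-1|\ge\mu\epsilon]\le 2\exp\!\left(-\frac{n\mu^2\epsilon^2}{8L^2\sigma^2}\right)$, where the constant $8$ reflects the variance proxy $4L^2\sigma^2$ of the summands. Combining this with the event inclusion from the first step yields \eqref{eq:estar-concentration}.

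I expect the main obstacle to be the second step, namely pinning down sub-Gaussianity of $\phi'(X-e^*)$ with an explicit constant. A naive symmetrization runs into the term $\mathbb{E}[e^{c L^2(X-X')^2}]$, which is only finite for small $\lambda$ and would degrade the clean sub-Gaussian tail into a sub-exponential one; this is exactly the borderline role played by $L$-smoothness. The monotonicity of $\phi'$, which forces the increment to match the sign of $X-X'$, is what lets me avoid the squared term and retain a genuinely sub-Gaussian bound, while careful bookkeeping of the centering (the mean being exactly $1$ by \eqref{eq:phi-diff}) is what fixes the final constant.
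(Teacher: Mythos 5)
Your proposal is correct, and its outer skeleton coincides with the paper's proof: both reduce the event $\bigl\{|\hat{e}_{n}-e^{*}|\geq\epsilon\bigr\}$ via strong convexity to the deviation event $\bigl\{\bigl|\frac{1}{n}\sum_{i=1}^{n}(\phi'(X_i-e^{*})-1)\bigr|\geq\mu\epsilon\bigr\}$ (your mean-value argument for $H_n$ is a repackaging of the paper's two-case inequality \eqref{eq:e*-en}, which needs only the monotone-gradient property of strong convexity rather than twice differentiability), then establish sub-Gaussianity of $\phi'(X_i-e^{*})-1$, and finish with Hoeffding's inequality. Where you genuinely diverge is the middle step. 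The paper proves Lemma \ref{lem:lipschitz-subg} through an Orlicz-norm route: it bounds sub-Gaussian moments, deduces $\mathbb{E}[e^{X^2/K_3^2}]\leq 2$, symmetrizes at the level of the \emph{squared} deviation to control $\mathbb{E}\bigl[e^{(f(X)-\mathbb{E}[f(X)])^2/K_3'^2}\bigr]$, and then invokes the equivalence of sub-Gaussian characterizations (Proposition 2.5.2 of Vershynin) to recover an MGF bound with proxy $4L^2\sigma^2$. You instead bound the MGF directly: symmetrize with an independent copy $X'$, use exchangeability of $(X,X')$, which makes $D=\phi'(X-e^{*})-\phi'(X'-e^{*})$ symmetric in distribution and kills the odd part, to write $\mathbb{E}[e^{\lambda D}]=\mathbb{E}[\cosh(\lambda D)]$, and then use $|D|\leq L|X-X'|$ with the evenness and monotonicity of $\cosh$ to get $\mathbb{E}[e^{\lambda D}]\leq\mathbb{E}[e^{\lambda L(X-X')}]\leq e^{\lambda^2L^2\sigma^2}$. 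This is more elementary and self-contained (no Orlicz machinery, no absolute constants imported from a characterization-equivalence theorem), and it actually yields the sharper variance proxy $2L^2\sigma^2$, so your route would support the stronger exponent $\frac{n\mu^2\epsilon^2}{4L^2\sigma^2}$; quoting the weaker proxy $4L^2\sigma^2$, as you do, reproduces the theorem's constant $8$ exactly. Two small corrections to your own commentary: (i) the sign-matching of the increment with $X-X'$, which you attribute to monotonicity of $\phi'$, is not needed anywhere—exchangeability alone gives $\mathbb{E}[e^{\lambda D}]=\mathbb{E}[\cosh(\lambda D)]$, and the $\cosh$ bound needs only the Lipschitz estimate $|D|\leq L|X-X'|$; (ii) the squared term $\mathbb{E}[e^{cL^2(X-X')^2}]$ that you flag as the obstacle arises in the paper's squared-deviation route, not in yours, and even there it is repaired by the Orlicz-to-MGF equivalence, so neither argument degrades to a sub-exponential tail.
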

\begin{proof}
    See Appendix \ref{proof:thm:estar-conc}.
\end{proof}
From the result above, it is apparent that the bound exhibits a Gaussian tail decay.

\paragraph{Bounds for OCE risk estimation.}
For our first result which is a mean squared error bound for OCE risk, we require an additional assumption stated below.
\begin{assumption}
    \label{ass:b_combined}
    $\EE{(\phi(X-e^{*}))^p}$ is bounded for $p = 1,2$ and $\EE{(\phi'(X-e^{*}))^k}$ is bounded for $k = 2,3,4$.
\end{assumption}
The moment bounds can be seen to be easily satisfied for a sub-Gaussian/sub-exponential $X$. In the general case, such moment bounds are necessary owing to the fact that the mean-squared error derivation naturally leads to terms involving $\EE{Z^2}$, $\EE{Y^2}$ and $\EE{Y^4}$, where 
$Z = \phi(X-e^{*}) - \mathbb{E}[\phi(X-e^{*})]$ and $Y = \phi'(X-e^{*}) - \mathbb{E}[\phi'(X-e^{*})] = \phi'(X-e^{*}) - 1$.

The result below bounds the mean-squared error of the OCE risk estimator defined in \eqref{def:OCE_n}.
\begin{theorem}
    \label{thm:mse-oce}
    Suppose \ref{ass:phi-stronglyconvex} to \ref{ass:DCT} and \ref{ass:b_combined} hold. Let $\var(X)$ denote the variance of a r.v. $X$. Then, we have
    \begin{align*}
    \EE{[\ocen - \oce(X)]^2} \!\leq \!\frac{2}{n}\var(\phi(X-e^{*})) \! + \frac{27L^2[\var{(\phi'(X-e^{*}))}]^2}{2n^2\mu^4} \! + \frac{9L^2\EE{(\phi'(X-e^{*}))^4}}{2n^3\mu^4}.
    \end{align*}
\end{theorem}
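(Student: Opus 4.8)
The plan is to split the estimation error into a statistical fluctuation term and an optimization term coming from the mismatch between the empirical minimizer $\hat e_n$ and the population minimizer $e^*$, and to control the two separately. Writing $\oce(X) = e^* + \EE{\phi(X-e^*)}$ and $\ocen = \hat e_n + \frac1n\sum_{i=1}^n \phi(X_i - \hat e_n)$ (using that $e^*$ and $\hat e_n$ are the respective minimizers), I would add and subtract the population minimizer inside the empirical objective to obtain the decomposition $\ocen - \oce(X) = P + B$, where $B = \frac1n\sum_{i=1}^n Z_i$ with $Z_i = \phi(X_i - e^*) - \EE{\phi(X-e^*)}$, and $P = (\hat e_n - e^*) + \frac1n\sum_{i=1}^n\bigl[\phi(X_i - \hat e_n) - \phi(X_i - e^*)\bigr]$. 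Then $(\ocen - \oce(X))^2 \le 2P^2 + 2B^2$, and since the $Z_i$ are i.i.d.\ and mean-zero, $\EE{B^2} = \frac1n \var(\phi(X-e^*))$, which already produces the first term $\frac2n\var(\phi(X-e^*))$ of the claimed bound.

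The core of the argument is then to show $\EE{P^2}$ is of the smaller order $O(n^{-2})$. First I would bound $|P|$ pointwise by $\tfrac{3L}{2}(\hat e_n - e^*)^2$: Taylor-expanding $\phi(X_i - \hat e_n)$ about $X_i - e^*$ and invoking \Cref{ass:phi-smooth} gives a remainder bounded by $\frac{L}{2}(\hat e_n - e^*)^2$ per sample, while the first-order terms combine with the leading $(\hat e_n - e^*)$ into $(\hat e_n - e^*)\bigl(1 - \frac1n\sum_i \phi'(X_i - e^*)\bigr)$. Setting $g_n(\xi):=\xi + \frac1n\sum_i\phi(X_i-\xi)$, the factor $1-\frac1n\sum_i\phi'(X_i-e^*)$ equals $g_n'(e^*)$; since $g_n'$ is $L$-Lipschitz (as $\phi''\le L$) and $g_n'(\hat e_n)=0$ by \eqref{eq:phin-diff}, we get $|g_n'(e^*)| = |g_n'(e^*)-g_n'(\hat e_n)| \le L|\hat e_n - e^*|$, contributing an extra $L(\hat e_n - e^*)^2$. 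Hence $P^2 \le \frac{9L^2}{4}(\hat e_n - e^*)^4$, and it remains to obtain a fourth-moment bound on $\hat e_n - e^*$.

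For that I would exploit \Cref{ass:phi-stronglyconvex}: $g_n$ is $\mu$-strongly convex, so monotonicity of its derivative together with $g_n'(\hat e_n)=0$ gives $\mu\,|\hat e_n - e^*| \le |g_n'(e^*)| = |\frac1n\sum_i Y_i|$, where $Y_i := \phi'(X_i - e^*) - 1 = \phi'(X_i - e^*) - \EE{\phi'(X-e^*)}$ are i.i.d.\ and mean-zero by \eqref{eq:phi-diff}. Raising to the fourth power and taking expectations, the identity for i.i.d.\ centered sums gives $\EE{(\frac1n\sum_i Y_i)^4} = \frac{\EE{Y_1^4}}{n^3} + \frac{3(n-1)}{n^3}(\EE{Y_1^2})^2 \le \frac{\EE{Y_1^4}}{n^3} + \frac{3(\EE{Y_1^2})^2}{n^2}$, so that $\EE{(\hat e_n - e^*)^4} \le \mu^{-4}\bigl[\frac{\EE{Y_1^4}}{n^3} + \frac{3(\EE{Y_1^2})^2}{n^2}\bigr]$, all moments being finite by \Cref{ass:b_combined}. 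Here $\EE{Y_1^2} = \var(\phi'(X-e^*))$, and assembling $2\EE{P^2} \le \frac{9L^2}{2}\EE{(\hat e_n-e^*)^4}$ with this bound reproduces the remaining two terms, up to replacing the central moment $\EE{Y_1^4}=\EE{(\phi'(X-e^*)-1)^4}$ by the raw moment $\EE{(\phi'(X-e^*))^4}$.

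The main obstacle I anticipate is exactly this fourth-moment control of $\hat e_n - e^*$ and the passage from the central to the raw moment. The clean way to handle the latter is to observe that $W := \phi'(X-e^*) \ge 0$ (as $\phi$ is nondecreasing) satisfies $\EE{W}=1$, so $\EE{W^4} - \EE{(W-1)^4} = 4\EE{W^3} - 6\EE{W^2} + 3 \ge 4(\EE{W^2})^{3/2} - 6\EE{W^2} + 3 \ge 0$, where the first inequality is the power-mean inequality $\EE{W^3}\ge(\EE{W^2})^{3/2}$ and the second holds for every $\EE{W^2}\ge 1$ (note $\EE{W^2}\ge(\EE{W})^2=1$). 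This yields $\EE{Y_1^4} \le \EE{(\phi'(X-e^*))^4}$ and closes the estimate; the remaining bookkeeping of constants (the $\tfrac{3L}{2}$, the factors $27/2$ and $9/2$, etc.) is then routine.
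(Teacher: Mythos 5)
Your proposal is correct and matches the paper's proof essentially step for step: the same decomposition of $\ocen - \oce(X)$ into a centered i.i.d.\ average plus an optimization-error term, the same pointwise bound $\tfrac{3L}{2}(\hat e_n - e^*)^2$ on that term (your Lipschitz argument on $g_n'$ is exactly the paper's auxiliary lemma in different packaging), the same strong-convexity reduction $\mu|\hat e_n - e^*| \le \bigl|\tfrac1n\sum_i Y_i\bigr|$, and the same fourth-moment expansion of the centered sum, yielding identical constants. The only difference is that you rigorously establish the central-to-raw moment comparison $\EE{(\phi'(X-e^*)-1)^4} \le \EE{(\phi'(X-e^*))^4}$ via the power-mean inequality, a step the paper asserts without proof.
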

\begin{proof}
See Appendix \ref{proof:thm:mse-oce}.
\end{proof}
Using Jensen's inequality, we can infer that
\[\EE{|\ocen - \oce(X)|} \le \sqrt{\EE{[\ocen - \oce(X)]^2}} \le \frac{K_1}{\sqrt{n}},\]
where $K_1$ can be inferred from the result above.

For OCE risk estimates to concentrate, we require a bound on the tail probability $P \Bigr( \Bigl| \phi(X - e^{*}) - \mathbb{E} [ \phi(X - e^{*}) ]\Bigr| \geq \epsilon \Bigl)$.
The lemma below establishes that $\phi(X - e^{*}) - \mathbb{E} [ \phi(X - e^{*}) ]$ is sub-exponential.
\begin{lemma}
\label{lem:lsmooth-subg}
Suppose $X$ is $\sigma^2$-sub-Gaussian and $\phi : \mathbb{R} \to \mathbb{R}^{+}$ is closed, convex, L-smooth, non-decreasing and satisfies $\phi(0) = 0$ and $\phi'(0)= 1$. Then the zero-mean r.v. $\phi(X-e^{*}) - \mathbb{E}[\phi(X-e^{*})]$ is sub-exponential with parameters $(\frac{4C_{1}}{c_{2}},\frac{2}{c_{2}})$, where $C_{1} = 2( 4 + \expo{3c_{0}| \frac{L(e^{*})^2}{2} - e^{*}|} - 3c_{0}|\frac{L(e^{*})^2}{2} - e^{*}| )$, $c_{2} = \frac{c_{0}}{2}$ and $c_{0} = \min\left(\frac{1}{12L\sigma^{2}}, \frac{1}{12\sigma|Le^{*}-1|}\right)$.
\end{lemma}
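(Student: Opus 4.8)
The plan is to bound the moment generating function (MGF) of the centered variable $Z := \phi(X - e^{*}) - \mathbb{E}[\phi(X-e^{*})]$ on a symmetric interval around the origin, and then repackage that bound into the canonical sub-exponential form $\mathbb{E}[e^{\lambda Z}] \le e^{\nu^2\lambda^2/2}$ valid for $|\lambda| < 1/b$, from which the parameters $(\nu,b)$ are read off.

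The first step is a sandwich inequality. Since $\phi$ is convex with $\phi(0)=0,\ \phi'(0)=1$, the supporting-line inequality gives $\phi(y) \ge y$, while $L$-smoothness (the descent-lemma bound anchored at $0$, cf. \Cref{ass:phi-smooth}) gives $\phi(y) \le y + \tfrac{L}{2}y^2$. Substituting $y = X - e^{*}$ yields
$$ (X - e^{*}) \le \phi(X - e^{*}) \le \tfrac{L}{2}X^2 + (1 - Le^{*})X + \Bigl(\tfrac{L(e^{*})^2}{2} - e^{*}\Bigr). $$
The constant $\beta := \tfrac{L(e^{*})^2}{2} - e^{*}$, the linear coefficient $(1 - Le^{*})$, and the quadratic coefficient $\tfrac{L}{2}$ are exactly the quantities that surface in $C_1$ and $c_0$. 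To bound $\mathbb{E}[e^{\lambda \phi(X-e^{*})}]$ I would exploit monotonicity of $e^{\lambda\cdot}$: for $\lambda<0$ the lower (linear, hence sub-Gaussian and harmless) bound suffices, while for $\lambda>0$ I use the upper bound and must control $\mathbb{E}[\exp(\lambda(\tfrac{L}{2}X^2 + (1-Le^{*})X + \beta))]$. Here I would peel off the deterministic factor $e^{\lambda\beta}$ and separate the quadratic and linear random factors via Hölder's inequality with three conjugate exponents equal to $3$ — this is the origin of the factor $3$ multiplying $c_0$ throughout the statement.

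The main obstacle is the squared-sub-Gaussian MGF: I need a bound of the form $\mathbb{E}[e^{sX^2}] \le \text{const}$ for $s$ below a threshold proportional to $1/\sigma^2$, which is where sub-Gaussianity of $X$ (\Cref{ass:gaussian}) enters, e.g. through the decoupling identity $e^{sX^2} = \mathbb{E}_{g\sim N(0,1)}[e^{\sqrt{2s}Xg}]$ followed by Fubini and the sub-Gaussian MGF. Requiring $3\lambda\cdot\tfrac{L}{2}$ to remain below this threshold forces $\lambda \le \tfrac{1}{12L\sigma^2}$, and the parallel control of the linear factor forces $\lambda \le \tfrac{1}{12\sigma|Le^{*}-1|}$; their minimum is $c_0$. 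Evaluating the deterministic factor at the endpoint produces the $e^{3c_0|\beta|}$ term, and collecting the quadratic and linear contributions assembles the additive constants into $C_1$. Tracking the cross terms between the linear and quadratic pieces, keeping the MGF strictly inside its radius of convergence, and making the constants come out exactly is the delicate bookkeeping; the structural ideas are routine once set up.

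The final step is the conversion. With an MGF estimate of the form $\mathbb{E}[e^{\lambda Z}] \le 1 + C_1\lambda^2 \le e^{C_1\lambda^2}$ holding on $|\lambda| \le c_2 = c_0/2$ (after subtracting the mean), I would apply a standard moment-extraction / Bernstein argument to restate the bound in the sub-exponential parametrization, yielding $(\nu, b) = (\tfrac{4C_1}{c_2}, \tfrac{2}{c_2})$. I expect the substantive work to lie in the squared-sub-Gaussian MGF bound and the constant-chasing of the third step, whereas the sandwich and the final repackaging are essentially mechanical.
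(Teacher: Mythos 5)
Your proposal is correct in outline and shares the paper's skeleton---the convexity/smoothness sandwich $y \le \phi(y) \le y + \tfrac{L}{2}y^2$ anchored at $0$, a bound on the MGF of $\phi(X-e^{*})$ over a symmetric interval $|\lambda|\le c_0$, and a Bernstein-type conversion of that bound into sub-exponential parameters---but your middle step is genuinely different. The paper expands $\mathbb{E}[e^{\lambda Z}]$ as a power series, applies Jensen's inequality twice (producing the $2^{k}\cdot 3^{k-1}$, hence $6^{k}$, factors), and bounds the three resulting sums $I_1,I_2,I_3$ term by term using the sub-Gaussian moment growth $\mathbb{E}[|X|^{k}]\le (2\sigma^{2})^{k/2}k\Gamma(k/2)$; you instead factor $e^{\lambda(\frac{L}{2}X^{2}+(1-Le^{*})X+\beta)}$ by three-way H\"older and control the quadratic piece by Gaussian decoupling plus Fubini. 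Your route is more modular and avoids the paper's Gamma-function bookkeeping (it is essentially a Hanson--Wright-style argument); the paper's series route is self-contained and is what actually generates the constants appearing in the statement. Two caveats, neither fatal, but worth recording. First, the numerology: in the paper the $3$ in $e^{3c_0|\beta|}$ comes from evaluating the deterministic sum $I_3=e^{6c\lambda}-1-6c\lambda$ at the Chernoff point $\lambda=c_0/2$, and the $12$ in $c_0$ from requiring $12\cdot\tfrac{L}{2}\sigma^{2}\lambda\le\tfrac{1}{2}$; both trace back to the Jensen $6^{k}$ factors, not to a H\"older exponent, so your bookkeeping would yield structurally similar but numerically different $(C_1,c_0)$, i.e., a variant of the lemma rather than the verbatim statement. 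Second, your interim claim $\mathbb{E}[e^{\lambda Z}]\le 1+C_1\lambda^{2}$ is not what either argument directly produces: both produce a constant bound on the MGF (in the paper, $4+e^{6c\lambda}-6c\lambda$) for $|\lambda|\le c_0$, and a quadratic-in-$\lambda$ form only emerges after the moment-extraction chain you defer to---Chernoff bound, exponential tails, tail integration giving $\mathbb{E}[|Z|^{k}]\le \tfrac{1}{2}\nu^{2}k!\,b^{k-2}$, Bernstein's condition, then sub-exponentiality---which is precisely the paper's endgame, so your ``standard Bernstein argument'' coincides with the paper's final step rather than shortcutting it.
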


\begin{proof}
    See Appendix \ref{proof:lem:lsmooth-subg}.
\end{proof}

We can now use this result to provide a Bernstein-type concentration bound for the OCE risk estimate in \eqref{def:OCE_n}.

\begin{theorem}
\label{thm:oce-conc1}
Suppose \Crefrange{ass:phi-stronglyconvex}{ass:gaussian} hold. Let $c_{0} = \min\left(\frac{1}{12L\sigma^{2}}, \frac{1}{12|Le^{*}-1|\sigma}\right)$, $C_{1} = 2( 4 + \expo{3c_{0}| \frac{L(e^{*})^2}{2} - e^{*}|} - 3c_{0}|\frac{L(e^{*})^2}{2} - e^{*}|)$ , $c_{2} = \frac{c_{0}}{2}$. Then, for any $\epsilon>0$, we have 
\begin{align}
\Prob{ \left| \ocen - \oce(X) \right| > \epsilon } \leq 2\expo{ \frac{-c_{2}n\epsilon^{2}}{4(4C_{1}+\epsilon)}} + 2\expo{-\frac{\mu^{2}n\epsilon}{24L^{3}\sigma^{2}}}.\label{eq:tb1}
\end{align}
\end{theorem}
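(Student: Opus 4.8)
The plan is to reduce the bound to two tail estimates that are already available, namely the minimizer concentration of \Cref{thm:estar-conc} and the sub-exponential property of \Cref{lem:lsmooth-subg}, through a purely deterministic decomposition of the estimation error. Write $S_n := \frac{1}{n}\sum_{i=1}^{n}\bigl(\phi(X_i - e^{*}) - \EE{\phi(X_i - e^{*})}\bigr)$ for the centered empirical mean of the disutility evaluated at the population minimizer. The first step is to establish the pointwise inequality
\[
\left|\ocen - \oce(X)\right| \le |S_n| + \frac{3L}{2}\bigl(\hat{e}_n - e^{*}\bigr)^2 .
\]
To get this, I would expand $\ocen = \hat{e}_n + \frac{1}{n}\sum_i \phi(X_i - \hat{e}_n)$ and Taylor-expand each $\phi(X_i - \hat{e}_n)$ about $e^{*}$. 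The $L$-smoothness in \Cref{ass:phi-smooth} bounds every second-order remainder by $\frac{L}{2}(\hat{e}_n - e^{*})^2$, which handles one part of the error. The first-order term is $(\hat{e}_n - e^{*})\bigl(1 - \frac{1}{n}\sum_i \phi'(X_i - e^{*})\bigr)$; here the empirical optimality condition \eqref{eq:phin-diff}, $\frac{1}{n}\sum_i \phi'(X_i - \hat{e}_n) = 1$, combined with the $L$-Lipschitzness of $\phi'$ (again \Cref{ass:phi-smooth}), shows that $\frac{1}{n}\sum_i \phi'(X_i - e^{*})$ differs from $1$ by at most $L|\hat{e}_n - e^{*}|$, so this cross term is bounded by $L(\hat{e}_n - e^{*})^2$. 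Adding $\frac{L}{2}$ and $L$ gives the $\frac{3L}{2}$ coefficient, and the remaining centered piece is exactly $S_n$.

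With this decomposition in hand, I would split the level symmetrically and union bound:
\[
\Prob{\left|\ocen - \oce(X)\right| > \epsilon} \le \Prob{|S_n| > \frac{\epsilon}{2}} + \Prob{\frac{3L}{2}\bigl(\hat{e}_n - e^{*}\bigr)^2 > \frac{\epsilon}{2}} .
\]
The second event is $\{\,|\hat{e}_n - e^{*}| > \sqrt{\epsilon/(3L)}\,\}$; feeding the threshold $\sqrt{\epsilon/(3L)}$ into \Cref{thm:estar-conc} gives $2\expo{-\mu^2 n\epsilon/(24 L^3 \sigma^2)}$, matching the second summand in \eqref{eq:tb1} exactly. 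For the first event, \Cref{lem:lsmooth-subg} certifies that each summand of $S_n$ is zero-mean and sub-exponential with parameters $(4C_1/c_2,\, 2/c_2)$; applying the Bernstein-type tail inequality for averages of i.i.d. sub-exponential variables at level $\epsilon/2$ produces the term $2\expo{-c_2 n \epsilon^2/(4(4C_1 + \epsilon))}$. Summing the two contributions yields \eqref{eq:tb1}.

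I expect the deterministic decomposition to be the main obstacle, specifically the cross term. It cannot be discarded directly because $\frac{1}{n}\sum_i \phi'(X_i - e^{*})$ is evaluated at the population minimizer $e^{*}$ rather than at $\hat{e}_n$, so its deviation from $1$ must be routed through the empirical first-order condition \eqref{eq:phin-diff} and the Lipschitz bound on $\phi'$ to convert it into a quantity of order $(\hat{e}_n - e^{*})^2$; getting the constant $\frac{3L}{2}$ right depends on this step. Everything afterward is routine: matching $\epsilon/2$ to each tail and invoking \Cref{thm:estar-conc} and \Cref{lem:lsmooth-subg}, with the only care being to check that the sub-exponential parameters reproduce the stated $c_0$, $C_1$, and $c_2$.
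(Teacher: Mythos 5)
Your proposal is correct and follows essentially the same route as the paper: the same deterministic sandwich $|\ocen - \oce(X)| \le |S_n| + \tfrac{3L}{2}(\hat e_n - e^*)^2$ (the paper derives it via two sandwich lemmas using $L$-smoothness and the empirical first-order condition \eqref{eq:phin-diff}, exactly the ingredients you use in your Taylor-expansion form), the same $\epsilon/2$ union-bound split, Bernstein via \Cref{lem:lsmooth-subg} for the centered sum, and \Cref{thm:estar-conc} at threshold $\sqrt{\epsilon/(3L)}$ for the quadratic term.
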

\begin{proof}
    See Appendix \ref{proof:thm:oce-conc1}.
\end{proof}
\begin{remark}
We can see that the tail bound stated above exhibits an exponential tail decay. Since we assume that the function $\phi$ is strongly convex, a tighter sub-Gaussian decay does not hold. Intuitively, the r.v. $\phi(X - e^{*})$ underlying OCE risk concentration is bounded below by a quadratic function of $X$, which precludes sub-Gaussian concentration for OCE risk.
\end{remark}
\begin{remark}
    In \cite{prashanth2022wasserstein}, the authors assume $\phi$ is Lipschitz and employ a Wasserstein distance-based approach to arrive at a bound with a sub-Gaussian tail. In contrast, the bound in  \Cref{thm:oce-conc1} exhibits sub-exponential tail decay for a $L$-smooth $\phi$. From \Cref{tab:OCE_risks}, it is apparent that our bounds are applicable for the mean-variance risk measure, since the underlying function $\phi$  is smooth.
\end{remark}
We can invert the bound in \ref{thm:oce-conc1} to arrive at the following  `high-confidence' form:
\begin{corollary}
\label{cor:oceest}
Under conditions of \Cref{thm:oce-conc1}, for any  $\delta \in (0,1)$, with probability at least $(1-\delta)$, we have
\begin{align*}
        \left| \ocen - \oce(X) \right| &\leq \left[\frac{1}{{c_2n}} + \frac{{6L^3\sigma^2}}{{\mu^2n}} \right] \log\frac{2}{\delta} + \sqrt{\left[\frac{1}{{c_2}} + \frac{{6L^3\sigma^2}}{{n\mu^2}} \right]^2 \log^2\left(\frac{2}{\delta}\right) + \frac{{8C_1}}{{c_2n}} \log\frac{2}{\delta}}.
\end{align*}
\end{corollary}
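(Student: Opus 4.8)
The plan is to invert the two-term tail bound of \Cref{thm:oce-conc1} into a high-confidence statement. Writing $t = \log\frac{2}{\delta}$, I would look for the smallest $\epsilon$ guaranteeing $\Prob{\left|\ocen - \oce(X)\right| > \epsilon} \le \delta$, so that on the complementary event $\left|\ocen - \oce(X)\right| \le \epsilon$ holds with probability at least $1-\delta$. Since the right-hand side of \eqref{eq:tb1} is a sum of two exponential terms, the natural route is to allocate the confidence budget $\delta$ between them, force each term below its allotted share, and then add the two resulting thresholds on $\epsilon$.

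The second term $2\expo{-\frac{\mu^{2}n\epsilon}{24L^{3}\sigma^{2}}}$ is the easy one: its exponent is linear in $\epsilon$, so inverting it is a single logarithmic step and yields a threshold of the form $\epsilon_{\mathrm{lin}} = c\,\frac{L^{3}\sigma^{2}}{\mu^{2}n}\,t$ for an explicit constant $c$. This is precisely the origin of the $\frac{6L^{3}\sigma^{2}}{\mu^{2}n}$ contributions appearing both in the outer linear coefficient and inside the square root of the stated bound. The more delicate factor is the Bernstein-type term $2\expo{-\frac{c_{2}n\epsilon^{2}}{4(4C_{1}+\epsilon)}}$, whose exponent is quadratic over linear in $\epsilon$. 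Taking logarithms and clearing the denominator turns the requirement into a quadratic inequality $c_{2}n\,\epsilon^{2} - a_{1}t\,\epsilon - a_{2}C_{1}t \ge 0$ with explicit $a_{1},a_{2}$; solving for the positive root by the quadratic formula produces a contribution of the schematic shape $\frac{t}{c_{2}n} + \sqrt{(\,\cdot\,)t^{2} + \frac{8C_{1}}{c_{2}n}t}$, which supplies the $\frac{8C_{1}}{c_{2}n}\log\frac{2}{\delta}$ term under the square root. Adding $\epsilon_{\mathrm{lin}}$ to this Bernstein threshold and using $\sqrt{x+y}\le\sqrt{x}+\sqrt{y}$ to fold the two linear-in-$t$ pieces together gives the claimed closed form.

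The main obstacle I anticipate is not conceptual but the combined Bernstein inversion and the attendant bookkeeping. Unlike a pure sub-Gaussian tail, the quadratic-over-linear exponent does not invert in one logarithmic step, so one must solve a quadratic and then relax its positive root into the clean $\left[\frac{1}{c_{2}n} + \frac{6L^{3}\sigma^{2}}{\mu^{2}n}\right]\log\frac{2}{\delta} + \sqrt{\cdots}$ form, all while allocating the failure budget across the two exponential terms so their sum stays at $\delta$ and the constants $C_{1}, c_{2}, L, \mu, \sigma$ remain consistent through the $\sqrt{x+y}\le\sqrt{x}+\sqrt{y}$ relaxations. Keeping these constants tracked correctly, rather than any hard inequality, is where the care is required.
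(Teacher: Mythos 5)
Your plan — split the confidence budget across the two exponential terms of \Cref{thm:oce-conc1}, invert the linear exponent in one logarithmic step, invert the Bernstein exponent via the quadratic formula, and take the sum of the two thresholds — is the natural route, and it is evidently what the authors intend: the paper records no proof of this corollary at all, only the remark that the tail bound ``can be inverted.'' The genuine gap lies exactly in the constant bookkeeping you set aside, and it cannot be closed. Carry out your own steps: with the budget split $\delta/2$ per term, every inversion is at level $\log\frac{4}{\delta}$, not $\log\frac{2}{\delta}$. The linear term requires $\epsilon \ge \frac{24L^{3}\sigma^{2}}{\mu^{2}n}\log\frac{4}{\delta}$ (its exponent is $\mu^{2}n\epsilon/(24L^{3}\sigma^{2})$, so the coefficient is $24$, not $6$). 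The Bernstein term requires $c_{2}n\epsilon^{2}-4\epsilon\log\frac{4}{\delta}-16C_{1}\log\frac{4}{\delta}\ge 0$, whose positive root is $\frac{2\log(4/\delta)}{c_{2}n}+\sqrt{\frac{4\log^{2}(4/\delta)}{c_{2}^{2}n^{2}}+\frac{16C_{1}\log(4/\delta)}{c_{2}n}}$, i.e.\ coefficients $2$ and $16C_{1}$ rather than $1$ and $8C_{1}$. (The final combination step is fine: one adds the linear threshold and enlarges the $t^{2}$ coefficient under the root by monotonicity; no subadditivity trick is needed.) So what your argument actually proves is the corollary with $\log\frac{4}{\delta}$ in place of $\log\frac{2}{\delta}$ and with every coefficient roughly doubled — a correct high-confidence bound, but strictly weaker than the one stated.

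Moreover, this is not repairable by a smarter budget allocation or sharper algebra: the stated constants do not follow from \Cref{thm:oce-conc1} at all. Fix $L,\sigma,C_{1},c_{2},n,\delta$ and let $\mu\to 0$. The stated threshold is then $\frac{12L^{3}\sigma^{2}}{\mu^{2}n}\log\frac{2}{\delta}\,(1+o(1))$, since the quantity $\frac{6L^{3}\sigma^{2}}{n\mu^{2}}$ appears once outside and once inside the square root and dominates all other terms. Substituting this $\epsilon$ into the theorem's second term gives $2\exp\left(-\tfrac{1}{2}\log\tfrac{2}{\delta}\right)=\sqrt{2\delta}$, which exceeds $\delta$ for every $\delta\in(0,1)$; hence the theorem's tail bound, evaluated at the corollary's threshold, is larger than $\delta$. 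Even the more generous allocation of $\delta$ per term (total failure probability $2\delta$) yields coefficients $\frac{2}{c_{2}n}$, $\frac{24L^{3}\sigma^{2}}{\mu^{2}n}$ and $16C_{1}$, still above the stated ones. The corollary as printed appears to have dropped factors of $2$--$4$ in the inversion (it also carries an internal inconsistency: $\frac{1}{c_{2}}$ inside the square root versus $\frac{1}{c_{2}n}$ outside). The honest conclusion of your proof — and, it seems, the corrected form of the statement — is the same expression with $\log\frac{4}{\delta}$ replacing $\log\frac{2}{\delta}$ and with coefficients $\frac{2}{c_{2}n}$, $\frac{24L^{3}\sigma^{2}}{\mu^{2}n}$ and $\frac{16C_{1}}{c_{2}n}$.
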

While the tail bound in \eqref{eq:tb1} is useful for the bandit application under best arm identification framework in Section \ref{sec:bandits}, the equivalent high-confidence form above cannot be employed for a upper confidence bound type algorithm to minimize regret, since $c_0, C_1$ require the knowledge of $e^*$, which is not known in a typical bandit setting.


\section{Stochastic approximation for OCE risk estimation}
\label{sec:sto-approx}
\paragraph{The estimators.}

Recall that the OCE risk minimizer $e^*$ satisfies $\mathbb{E}[\phi'(X-e^{*})] - 1 =0$. Using stochastic approximation, we arrive at the following update iteration for obtaining an estimate $t_j$ of the OCE risk minimizer $e^*$:
    \begin{align}
 t_j = t_{j-1} - \gamma_j (1-\phi'(X_{j}-t_{j-1})), \label{eq:sto-approx}  
    \end{align}
where $\{X_{j}\}$ are i.i.d. samples from the distribution $X$ and $\gamma_j$ is a suitably chosen step size. Compared to the batch estimator in the previous section, the above update is more amenable to `streaming' settings, where the samples arrive one at a time.

Inspired by \cite{moulines2011long}, we derive bounds for the \textit{averaged iterate}, and use this quantity to estimate OCE. These quantities, denoted by  $\bar{t}_m$ and $oce_{m,sa}^{\phi}$ are defined as follows:
 \begin{align}
\bar{t}_m = \frac{1}{m}\sum_{i=0}^{m-1} t_{i}\,\, \textrm{ and } \,\,\ocesa = \bar{t}_{m} +  \frac{\sum_{i=1}^{m} \phi(X_{i} - \bar{t}_{m})}{m}.\label{eq:oce-sgd}
 \end{align}

\paragraph{Results.}
For deriving a mean-squared error bound, we require the following assumption in addition to \Crefrange{ass:phi-stronglyconvex}{ass:DCT} and \ref{ass:b_combined}:
\begin{assumption}
    \label{ass:b1}
     The second derivative $\phi''$ is $M$-lipschitz. That is for all $x_{1}$,$x_{2}$ $\in \mathbb{R} $, $|\phi''(x_{1}) - \phi''(x_{2})| \leq M|x_{1}-x_{2}|$.
\end{assumption}

The main result that provides a mean-squared error bound for the averaged iterate $\bar t_m$ after $m$ iterations of \eqref{eq:sto-approx} is given below.

\begin{theorem}
    \label{thm:bach-t_star}
 Suppose \Crefrange{ass:phi-stronglyconvex}{ass:DCT}, \ref{ass:b_combined} and \ref{ass:b1} hold. 
 Suppose $m$ iterations of \eqref{eq:sto-approx} are run with stepsize $\gamma_{j} = \frac{b}{j^{\alpha}}$ with $\alpha \in (0.5,1)$.
 Then, the averaged iterate $\bar t_m$ satisfies
\begin{align}
 \mathbb{E} &[(\bar{t}_{m}-e^{*})^{2}] \leq \frac{\K^2}{m},  \label{eq:tm-sa-bound}
\end{align}
where 
\begin{align*}
 \K &= \frac{\sigma}{\mu} + \frac{6 \sigma}{\mu b^{1 / 2}} + \frac{M b \tau^{2}}{2 \mu^{3 / 2}}\left(1+(\mu b)^{1 / 2}\right) + \frac{4 L b^{1 / 2}}{\mu} \\
 &\quad+ \frac{8 A}{\mu^{1 / 2}}\left(\frac{1}{b}+L\right)\left( \mathbb{E}[(t_{0}-e^{*})^{2}] +\frac{\sigma^{2}}{L^{2}}\right)^{1 / 2} + \frac{5 M b^{1 / 2} \tau}{2 \mu} A \exp \left(24 L^{4} b^{4}\right)\\
 &\quad\times\left( \mathbb{E}[(t_{0}-e^{*})^{2}] +\frac{\mu \mathbb{E}\left[(t_{0}-e^{*})^{4}\right]}{20 b \tau^{2}}+2 \tau^{2} b^{3} \mu+8 \tau^{2} b^{2}\right)^{1 / 2},
\end{align*}
and $A$ is a constant that depends only on $\mu, b, L$ and $\alpha$.
\begin{proof}
The proof follows by an application of \cite[Theorem 3]{moulines2011long} and verifying the requisite conditions there. The reader is referred to Appendix \ref{proof:thm:bach-t_star} for the details.
\end{proof}
\end{theorem}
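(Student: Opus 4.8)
The plan is to recognize the recursion \eqref{eq:sto-approx} as stochastic gradient descent on the convex objective $F(t) := t + \EE{\phi(X-t)}$, and then invoke \cite[Theorem 3]{moulines2011long} for its Polyak--Ruppert averaged iterate. First I would observe that $F'(t) = 1 - \EE{\phi'(X-t)}$ and $F''(t) = \EE{\phi''(X-t)}$ (the interchange of differentiation and expectation being justified by \Cref{ass:DCT}), so that $e^*$ is exactly the unique minimizer of $F$, characterized by $F'(e^*)=0$ as in \eqref{eq:phi-diff}. Since $\phi$ is $\mu$-strongly convex and $L$-smooth (\Cref{ass:phi-stronglyconvex,ass:phi-smooth}), $F$ inherits $\mu$-strong convexity and $L$-smoothness. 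The quantity $1-\phi'(X_j - t_{j-1})$ in \eqref{eq:sto-approx} is a conditionally unbiased estimate of $F'(t_{j-1})$, so I would rewrite the recursion as $t_j = t_{j-1} - \gamma_j\bigl(F'(t_{j-1}) + \epsilon_j\bigr)$ with martingale-difference noise $\epsilon_j = \EE{\phi'(X-t_{j-1})} - \phi'(X_j - t_{j-1})$, thereby placing the iteration squarely in the framework of \cite{moulines2011long}.

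The bulk of the argument is to verify the hypotheses of \cite[Theorem 3]{moulines2011long} in this setting. The structural conditions are immediate: strong convexity and smoothness are inherited as above, and the Lipschitz-Hessian condition follows directly from \Cref{ass:b1} via Jensen's inequality, since $|F''(t_1) - F''(t_2)| \le \EE{|\phi''(X-t_1) - \phi''(X-t_2)|} \le M|t_1 - t_2|$. The substantive work lies in the noise conditions. I would bound the conditional variance $\EE{\epsilon_j^2 \mid \mathcal F_{j-1}} = \var(\phi'(X - t_{j-1}))$ using the $L$-Lipschitzness of $\phi'$ together with the sub-Gaussianity of $X$ (\Cref{ass:gaussian}): since $x \mapsto \phi'(x-t)$ is $L$-Lipschitz, one has $\var(\phi'(X-t)) \le L^2\var(X) \le L^2\sigma^2$ uniformly in $t$, which furnishes the uniform variance bound required there. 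The gradient-noise variance at the optimum, $\tau^2 := \var(\phi'(X-e^*))$, controls the leading term, while the fourth-moment bound on $\phi'(X-e^*)$ in \Cref{ass:b_combined} supplies the higher-order moment control needed for the averaged-iterate analysis.

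With all hypotheses verified, the conclusion \eqref{eq:tm-sa-bound} and the explicit constant $\K$ follow by substituting the identified parameters ($\mu$ for strong convexity, $L$ for smoothness, $M$ for the Hessian-Lipschitz constant, $\tau^2$ and $\sigma^2$ for the noise levels, and $\gamma_j = b\,j^{-\alpha}$) into the constant appearing in \cite[Theorem 3]{moulines2011long}, then tracing each summand of $\K$ to its counterpart there: the leading $\sigma/\mu$ as the asymptotic (CLT) term, the $\tfrac{Mb\tau^2}{2\mu^{3/2}}$ term as the Hessian-Lipschitz correction, the factor $\exp(24L^4 b^4)$ as the bound on the accumulated product of contraction factors $\prod_j(1-\mu\gamma_j)$, and the remaining summands as the initialization and transient contributions.

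I expect the main obstacle to be the careful bookkeeping of constants, namely translating the abstract variance and moment hypotheses of \cite{moulines2011long} into concrete bounds expressed through $\sigma$, $L$, $\mu$, $M$ and $\tau$, and matching them so that the stated form of $\K$ is reproduced term-for-term. The analytic content is light once the SGD reformulation is in place; the difficulty is essentially entirely in the constant-tracking and in checking that the variance control holds uniformly over the iterates and not merely at the optimum.
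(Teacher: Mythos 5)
Your proposal follows essentially the same route as the paper: recast \eqref{eq:sto-approx} as SGD on $F(t)=t+\EE{\phi(X-t)}$, verify the hypotheses of \cite[Theorem 3]{moulines2011long} (unbiasedness, a.s.\ Lipschitz stochastic gradients from $L$-smoothness of $\phi$, strong convexity of $F$ from that of $\phi$, Lipschitz second derivative from \Cref{ass:b1}, and moment bounds on the gradient noise), and then read off the constant $\K$. One point needs correction, however: this theorem does \emph{not} assume sub-Gaussianity of $X$ (\Cref{ass:gaussian} is deliberately absent from its hypothesis list), so your verification of the noise condition via $\var(\phi'(X-t))\le L^2\var(X)\le L^2\sigma^2$ invokes an assumption you do not have. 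It is also unnecessary: the conditions H4 and H7 of \cite{moulines2011long} require the second and fourth moments of the stochastic gradient only \emph{at the optimum} $e^*$ (smoothness of the $f_j$ controls the noise elsewhere), and these are exactly what \Cref{ass:b_combined} provides, which is how the paper proceeds. Beyond being an unlicensed hypothesis, your uniform bound would also distort the constant: with $\sigma^2$ interpreted as the bound on $\EE{(1-\phi'(X_j-e^*))^2\mid \mathcal F_{j-1}}$ (as in the paper), the leading term of $\K$ is $\sigma/\mu$, whereas your sub-Gaussian route would yield $L\sigma/\mu$, so the stated form of $\K$ would not be reproduced term-for-term. With the noise conditions instead verified at $e^*$ from \Cref{ass:b_combined}, your argument matches the paper's proof.
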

The mean-squared error bound from \eqref{eq:tm-sa-bound} is comparable to the corresponding result for the batch estimator \eqref{eq:phin-diff} in \Cref{thm:mse-estar}, if we set $m=n$, i.e., run \eqref{eq:sto-approx} for $n$-samples and compare it to $\hat e_n$.

Next, we use the  bound in \eqref{eq:tm-sa-bound} to derive a bound on the mean-squared error for the OCE risk estimate in \eqref{eq:oce-sgd}.
\begin{theorem}
\label{thm:exp_oce}
Under conditions of \Cref{thm:bach-t_star}, we have
\begin{equation}
\EE{|\ocesa(X) - \oce(X)|} \leq \frac{L \K^2}{2m} + \frac{\K\sqrt{\var{(\phi'(X-e^{*}))}}}{m} + \frac{\var{(\phi(X-e^{*}))}}{\sqrt{m}}.
\end{equation}
where $\K$ is as defined in \Cref{thm:bach-t_star}.
\end{theorem}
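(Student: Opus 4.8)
The plan is to reduce the error to three contributions, each controlled either by the mean-squared error bound of \Cref{thm:bach-t_star} or by the fluctuations of $\phi$ and $\phi'$ evaluated at the \emph{fixed} point $e^{*}$. First I would record the identity $\oce(X) = e^{*} + \EE{\phi(X-e^{*})}$, which holds because $e^{*}$ attains the infimum in \eqref{eq:OCE-def}. Writing $h_m(\xi) = \frac{1}{m}\sum_{i=1}^m \phi(X_i-\xi)$ and $h(\xi)=\EE{\phi(X-\xi)}$, the definition \eqref{eq:oce-sgd} yields the exact algebraic decomposition
\[
\ocesa - \oce(X) = (\bar{t}_m - e^{*}) + \bigl(h_m(\bar{t}_m)-h_m(e^{*})\bigr) + \bigl(h_m(e^{*})-h(e^{*})\bigr).
\]

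The core step is to expand $h_m(\bar{t}_m)-h_m(e^{*})$ to first order about $e^{*}$ using $L$-smoothness (\Cref{ass:phi-smooth}). Applying the smoothness inequality to each pair $(X_i-e^{*},\,X_i-\bar{t}_m)$ gives
\[
\phi(X_i-\bar{t}_m)-\phi(X_i-e^{*}) = -\phi'(X_i-e^{*})(\bar{t}_m-e^{*}) + r_i,\qquad |r_i|\le \tfrac{L}{2}(\bar{t}_m-e^{*})^2 .
\]
Averaging and invoking the stationarity condition $\EE{\phi'(X-e^{*})}=1$ from \eqref{eq:phi-diff}, I set $Y_i = \phi'(X_i-e^{*})-1$, which are i.i.d. and mean zero, so that $\frac{1}{m}\sum_i \phi'(X_i-e^{*}) = 1 + \frac{1}{m}\sum_i Y_i$. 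Substituting back, the leading $(\bar{t}_m-e^{*})$ term cancels exactly against the first summand of the decomposition, leaving
\[
\ocesa - \oce(X) = -\Bigl(\tfrac{1}{m}\sum_{i=1}^m Y_i\Bigr)(\bar{t}_m-e^{*}) + \tfrac{1}{m}\sum_{i=1}^m r_i + \bigl(h_m(e^{*})-h(e^{*})\bigr).
\]

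Finally I would take $\EE{|\cdot|}$, apply the triangle inequality, and bound each term. For the remainder, $\bigl|\frac{1}{m}\sum_i r_i\bigr|\le \frac{L}{2}(\bar{t}_m-e^{*})^2$, so \Cref{thm:bach-t_star} gives $\EE{\bigl|\frac{1}{m}\sum_i r_i\bigr|}\le \frac{L\K^2}{2m}$. For the cross term, Cauchy--Schwarz together with $\EE{\bigl(\frac{1}{m}\sum_i Y_i\bigr)^2}=\frac{1}{m}\var(\phi'(X-e^{*}))$ (using that the $Y_i$ are i.i.d. and mean zero) and $\EE{(\bar{t}_m-e^{*})^2}\le \K^2/m$ gives $\frac{\K\sqrt{\var(\phi'(X-e^{*}))}}{m}$. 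For the last term, Jensen's inequality and the i.i.d. average yield $\EE{|h_m(e^{*})-h(e^{*})|}\le \sqrt{\var(\phi(X-e^{*}))/m}$. Summing the three estimates gives the claim.

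I expect the main subtlety to be two-fold. First, one must recognize that the smoothness expansion combined with the stationarity identity $\EE{\phi'(X-e^{*})}=1$ produces an \emph{exact} cancellation of the first-order $(\bar{t}_m-e^{*})$ term, so that no uncontrolled ``empirical deviation of $\phi'$ evaluated at the random point $\bar{t}_m$'' survives; this is what forces the error into exactly three clean terms. Second, one must handle the statistical dependence between the averaged iterate $\bar{t}_m$ and the samples $X_1,\dots,X_m$ appearing in $h_m$. This dependence is sidestepped by invoking Cauchy--Schwarz on the cross term rather than trying to factorize the expectation, since Cauchy--Schwarz requires only the marginal second-moment bounds on $\frac{1}{m}\sum_i Y_i$ and on $\bar{t}_m-e^{*}$, both of which are available independently of the joint law.
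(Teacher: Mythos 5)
Your proof is correct and takes essentially the same route as the paper's: the identical three-term decomposition, the $L$-smoothness expansion whose first-order term cancels through the stationarity identity $\EE{\phi'(X-e^{*})}=1$, Cauchy--Schwarz on the cross term (which, as you note, handles the dependence between $\bar{t}_m$ and the samples), and Jensen's inequality on the empirical average at $e^{*}$, yielding the same three bounds. One small remark: your final term $\sqrt{\var(\phi(X-e^{*}))}/\sqrt{m}$ agrees with what the paper's proof actually derives, whereas the theorem statement writes $\var(\phi(X-e^{*}))/\sqrt{m}$ without the square root --- an apparent typo in the statement, not a gap in your argument.
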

\begin{proof}
    See Appendix \ref{proof:thm:exp_oce}.
\end{proof}


\section{Application: Multi-Armed Bandits}
\label{sec:bandits}
We consider a a $K$-armed stochastic bandit problem, which is characterized by the probability distributions of the arms, denoted as $P_1,\ldots,P_K$. We focus on identifying the arm exhibiting the lowest OCE risk value within a predetermined sampling budget. Here, a bandit algorithm engages with the environment over a fixed budget comprising $n$ rounds. At each round $t = 1,\ldots,n$, the algorithm selects an arm $I_t \in {1,\ldots,K}$ and records a cost sample from the distribution $P_{I_t}$. Upon completing the $n$ rounds, the bandit algorithm suggests an arm $J_n$, and is evaluated based on the probability of mis-identifying the optimal arm, denoted as $P[J_n \neq i^{\ast}]$, where $i^{\ast}$ represents the arm with the lowest OCE risk value, i.e., $i^* = \argmin_{i=1,\ldots,K} oce_{i}^{\phi}$.

Algorithm 1 outlines the pseudocode for our OCE-SR algorithm, tailored to identify the OCE-optimal arm under a fixed budget constraint. This algorithm presents a modification of the conventional successive rejects (SR) approach, with a distinction: while regular SR employs sample means to estimate the expected value of each arm, OCE-SR utilizes empirical OCE risk, as described in \Cref{def:OCE_n}, for estimating the OCE risk of each arm. The elimination strategy, involving $K-1$ phases and discarding the arm with the poorest OCE risk estimate at the conclusion of each phase, is borrowed from the regular SR framework. 

\begin{algorithm}
\caption{OCE-based SR Algorithm}\label{oce_algo}
\begin{algorithmic}[1]
\State \textbf{Initialization:} Set $A_1 = \{1, \ldots, K\}$, $\overline{\log K} = \frac{1}{2} + \sum_{i=2}^{K} \frac{1}{i}$, $n_0 = 0$, $n_k = \left\lceil \frac{\left(n - K \right)}{\log K(K+1-k)} \right\rceil$, for $k = 1, \ldots, K - 1$.
\For{$k = 1, 2, \ldots, K - 1$}
    \State Play each arm in $A_k$ for $t_{k} = (n_k - n_{k-1})$ times.
    \State Compute the OCE risk infimum estimates $\hat{e}_{t_{k}}^{i}$ by solving \Cref{eq:phin-diff} for each arm $i$ in $A_k$.
    \State Compute the OCE risk estimate $\hat{oce}_{t_{k}}^{i}$ for each arm $i \in A_k$ using \Cref{def:OCE_n} for each arm $i$ in $A_k$.
    \State Set $A_{k+1} = A_k \setminus \arg\max_{i \in A_k} \hat{oce}_{t_k}^{i}$, i.e., remove the arm with the highest empirical OCE risk, with ties broken arbitrarily.
\EndFor
\State \textbf{Output:} Return the solitary element in $A_K$.
\end{algorithmic}
\end{algorithm}
The ensuing result delves into the performance analysis of the OCE-SR algorithm for sub-Gaussian distributions.
\begin{theorem}
\label{thm:oce-algo}
Consider a $K$-armed stochastic bandit, where the arms follow a sub-Gaussian distribution. Let arm-$[i]$ to denote the arm with the $i^{th}$ lowest OCE risk value. Let $\Delta_{[i]} = \oce_{i} - \oce_{i^{*}}$ represent the difference between the OCE risk values of $arm-[i]$ and the optimal arm. For a given budget $n$, the arm, say $J_{n}$, returned by the OCE-SR algorithm satisfies:
\[
\mathbb{P}\left[J_{n} \neq i^{*}\right] \leq 4 K(K-1) \exp \left(-\frac{(n-K)(1-\alpha) G_{\max }}{H \overline{\log } K}\right),
\]
where $G_{\max }$ is a problem dependent constant that does not depend on the underlying OCE risk gaps $\Delta_{[i]}$ and $n$, and
$
H=\max _{i \in\{1,2 \ldots, K\}} \frac{i}{\min \left\{\Delta_{[i]} / 2, \Delta_{[i]}^{2} / 4\right\}}.
$

\begin{proof}
    See Appendix \ref{proof:thm:oce-algo}.
\end{proof} 

\end{theorem}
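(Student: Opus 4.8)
The plan is to mirror the classical successive-rejects analysis, replacing the usual sub-Gaussian deviation estimate with the sub-exponential OCE concentration bound of \Cref{thm:oce-conc1}. Write arm $[1]$ for the optimal arm $i^*$ and $[i]$ for the arm with the $i$-th smallest OCE risk, so that $\Delta_{[2]}\le\cdots\le\Delta_{[K]}$ and lower OCE risk is better. Since a mis-identification can occur only if the optimal arm is discarded in one of the $K-1$ phases, the first step is the union bound
\[
\Prob{J_n\neq i^*}\le\sum_{k=1}^{K-1}\Prob{\text{arm }[1]\text{ rejected in phase }k}.
\]

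Next I would carry out the combinatorial step standard for SR. If arm $[1]$ is rejected at the end of phase $k$, its empirical OCE risk $\hat{oce}_{[1]}$ is the largest among the $K-k+1$ surviving arms. Only $k-1$ arms have been discarded in the preceding phases, so at least one arm from the $k$ worst arms $\{[K-k+1],\dots,[K]\}$ must still be in play; call it $[m]$, and note $\Delta_{[m]}\ge\Delta_{[K-k+1]}$. Rejection of $[1]$ forces $\hat{oce}_{[1]}\ge\hat{oce}_{[m]}$, with both estimates formed from the $n_k$ samples collected by the end of phase $k$, and hence at least one of the events $\{\hat{oce}_{[1]}-\oce_{[1]}\ge\Delta_{[K-k+1]}/2\}$ and $\{\oce_{[m]}-\hat{oce}_{[m]}\ge\Delta_{[K-k+1]}/2\}$ occurs. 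A union bound over the (at most $k$) candidate arms $[m]$ then reduces the phase-$k$ probability to one-sided deviations of a single OCE estimate at level $\epsilon=\Delta_{[K-k+1]}/2$.

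The core analytic step is to feed $\epsilon=\Delta_{[K-k+1]}/2$ into \Cref{thm:oce-conc1}. Because \Cref{lem:lsmooth-subg} makes $\phi(X-e^*)$ sub-exponential rather than sub-Gaussian, the Bernstein-type exponent $c_2 n\epsilon^2/\big(4(4C_1+\epsilon)\big)$ splits into a quadratic regime (small $\epsilon$) and a linear regime (large $\epsilon$); linearizing the denominator $4C_1+\epsilon$ on each side of a threshold turns the first term into $\exp\!\big(-c\,n\min\{\Delta/2,\Delta^2/4\}\big)$, while the second term $\exp(-\mu^2 n\epsilon/24L^3\sigma^2)$ is already linear in $\Delta$ and is dominated in the quadratic regime. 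Collecting all distribution-dependent quantities into a single problem-dependent rate $G_{\max}$ (this absorbs the per-arm constants $c_0,C_1,c_2$, which depend on each arm's $\sigma_i$ and $e^*_i$, together with a slack parameter $\alpha\in(0,1)$ that is produced precisely when $4C_1+\epsilon$ is linearized), each one-sided deviation is at most $\exp\!\big(-n_k(1-\alpha)G_{\max}\min\{\Delta_{[K-k+1]}/2,\Delta_{[K-k+1]}^2/4\}\big)$.

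Finally I would uniformize over phases. Using $n_k\ge\frac{n-K}{\overline{\log} K\,(K+1-k)}$ from the schedule, substituting $i=K+1-k$, and invoking $\min\{\Delta_{[i]}/2,\Delta_{[i]}^2/4\}/i\ge 1/H$ from the definition of $H$, the exponent in every phase is bounded below by $\frac{(n-K)(1-\alpha)G_{\max}}{\overline{\log} K\,H}$, independently of $k$. The prefactor then follows by counting: phase $k$ contributes at most $k$ candidate arms, each producing two one-sided deviations bounded through the two-term tail of \Cref{thm:oce-conc1}, and once both terms are collapsed to the common exponential, summing over the candidates and over phases via $\sum_{k=1}^{K-1}k=\tfrac{K(K-1)}{2}$ yields the factor $4K(K-1)$. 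I expect the main obstacle to be this third step: converting the two-term sub-exponential tail into a single clean exponential with the $\min\{\Delta/2,\Delta^2/4\}$ rate, and defining $G_{\max}$ so that it works simultaneously for all arms (whose constants differ) without reintroducing any dependence on the gaps $\Delta_{[i]}$ or on $n$.
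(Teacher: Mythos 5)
Your proposal follows essentially the same route as the paper's proof: the same successive-rejects union bound and gap-splitting decomposition, the same collapse of the two-term sub-exponential tail from \Cref{thm:oce-conc1} into a single bound of the form $\exp\left(-n\min\{\epsilon,\epsilon^2\}G\right)$ with the per-arm constants absorbed into $G_{\max}$, the same uniformization over phases via $H$ and the $n_k$ schedule, and the same counting through $\sum_{k=1}^{K-1}k$ that yields the $4K(K-1)$ prefactor. The only divergence is cosmetic: you interpret the $(1-\alpha)$ factor as slack from linearizing $4C_1+\epsilon$, whereas the paper's own derivation never actually produces such a factor (it appears to be carried over from the CVaR analogue in \cite{prashanth2020concentration}, where $\alpha$ is the CVaR level), and your flagged concern about defining a gap-independent constant uniformly over arms is indeed the weak point of the paper's proof as well, which writes $G_{\max}=\max_i G_i$ where a minimum over arms is what the uniform bound requires.
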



\section{Simulation experiments}
\label{sec:expts}
In this section, we illustrate the effectiveness of our proposed OCE estimators in \Cref{def:OCE_n} and \Cref{eq:oce-sgd} on two different settings. In the first setting, we investigate the performance of our OCE estimators in a synthetic experimental setup. In the second setting, we apply our stochastic approximation-based OCE estimator \Cref{eq:oce-sgd} to the credit risk model studied earlier in \cite{dunkel2010stochastic,hegde2021ubsr}.

\paragraph{Synthetic Setup.}
For this experiment, we consider a normal distribution with mean $0.5$ and variance $25$.
We set the disutility function $\phi(t) = t + \frac{t^2}{2}$, and this choice satisfies the smoothness assumption \Cref{ass:phi-smooth}. From \Cref{tab:OCE_risks}, we know $e^{*} = 0.5$ and $oce^{\phi}(X) = 13$. Figure \ref{fig:oce-batch-normal} presents the estimation errors $|\hat{e}_{n} - e^{*}|$ and $|oce_{n}^{\phi}-oce^{\phi}(X)|$ as a function of the number of samples ($n$). The results  are averages over $1000$ independent replications.
From Figure \ref{fig:oce-batch-normal}, it is apparent that the estimators \Cref{eq:phin-diff} and \Cref{def:OCE_n} converge rapidly to the true values.

\begin{figure}[htbp]
    \centering
    \begin{subfigure}[b]{0.40\textwidth}
        \centering
        \includegraphics[width=\textwidth]{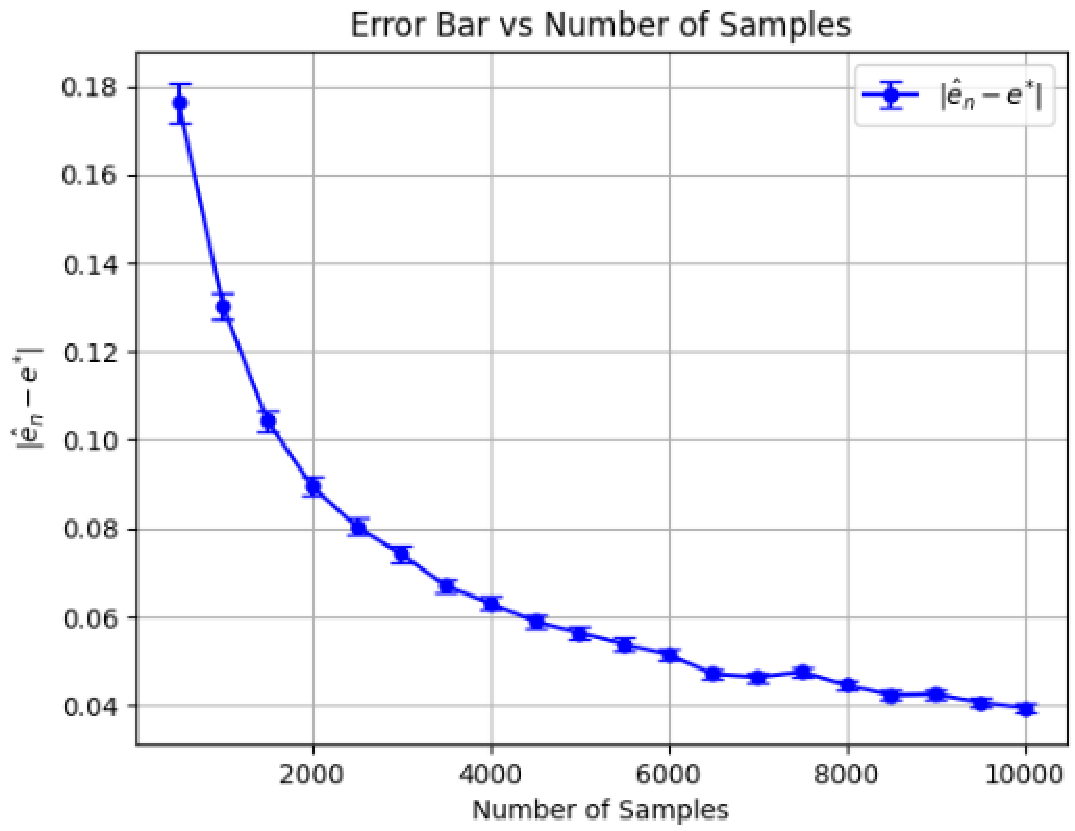}
        \caption{The error in estimation of OCE risk minimizer as a function of the number of samples.}
        \label{fig:subfig1}
    \end{subfigure}
    \hfill
    \begin{subfigure}[b]{0.40\textwidth}
        \centering
        \includegraphics[width=\textwidth]{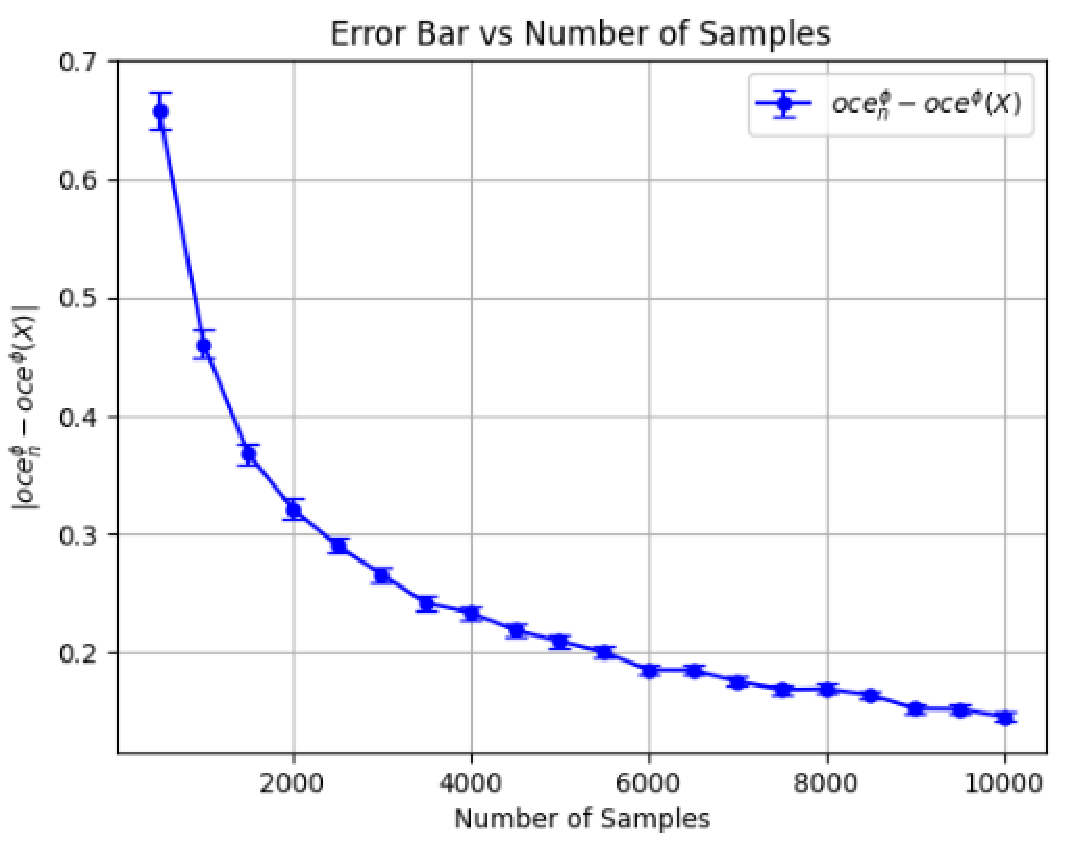}
        \caption{The error in estimation of OCE risk as a function of the number of samples.}
        \label{fig:subfig2}
    \end{subfigure}
    \caption{Errors in estimation of OCE risk and its minimizer, when the underlying distribution is $\mathcal{N}(0.5,5^2)$. OCE risk and its minimizer are estimated using \eqref{def:OCE_n} and \eqref{eq:phin-diff}. The results are averages over $1000$ independent replications.}
    \label{fig:oce-batch-normal}
\end{figure}

Next, we present results for the streaming estimator described in Section \ref{sec:sto-approx}.
We set the disutility function as $\phi(t) = t + \frac{t^2}{2}$. From \Cref{tab:OCE_risks}, $e^{*} = 0.5$ and $oce^{\phi}(X) = 13$. We carried out our stochastic approximation scheme \Cref{eq:sto-approx} for $5000$ iterations and replicated the experiment $1000$ times independently and took the averaged results for different step sizes. The plots for $\EE{(\bar{t}_{k}-e^{*})^2}$ as a function of the number of samples ($k$) are in \Cref{fig:subfig3} and \Cref{fig:subfig4}. Figure \ref{fig:streaming} demonstrates a clear and swift convergence of the estimators in \ref{eq:oce-sgd} towards their true values.

\begin{figure}[htbp]
    \centering
    \begin{subfigure}[b]{0.45\textwidth}
        \centering
        \includegraphics[height=1.7in]{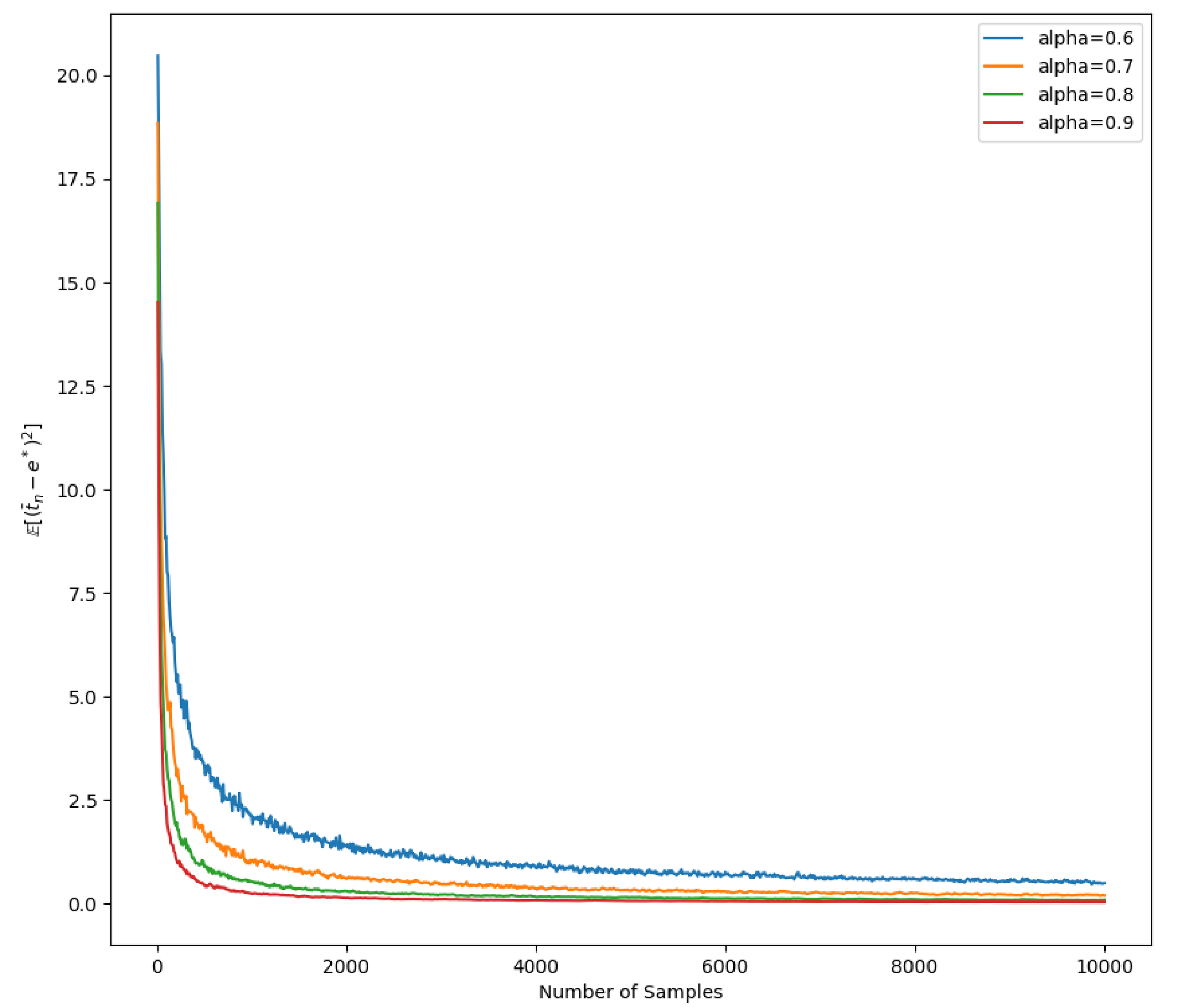}
        \caption{$\mathbb{E}[(\bar{t}_{k}-e^{*})^2]$}
        \label{fig:subfig3}
    \end{subfigure}
    \begin{subfigure}[b]{0.45\textwidth}
        \centering
        \includegraphics[height=1.7in]{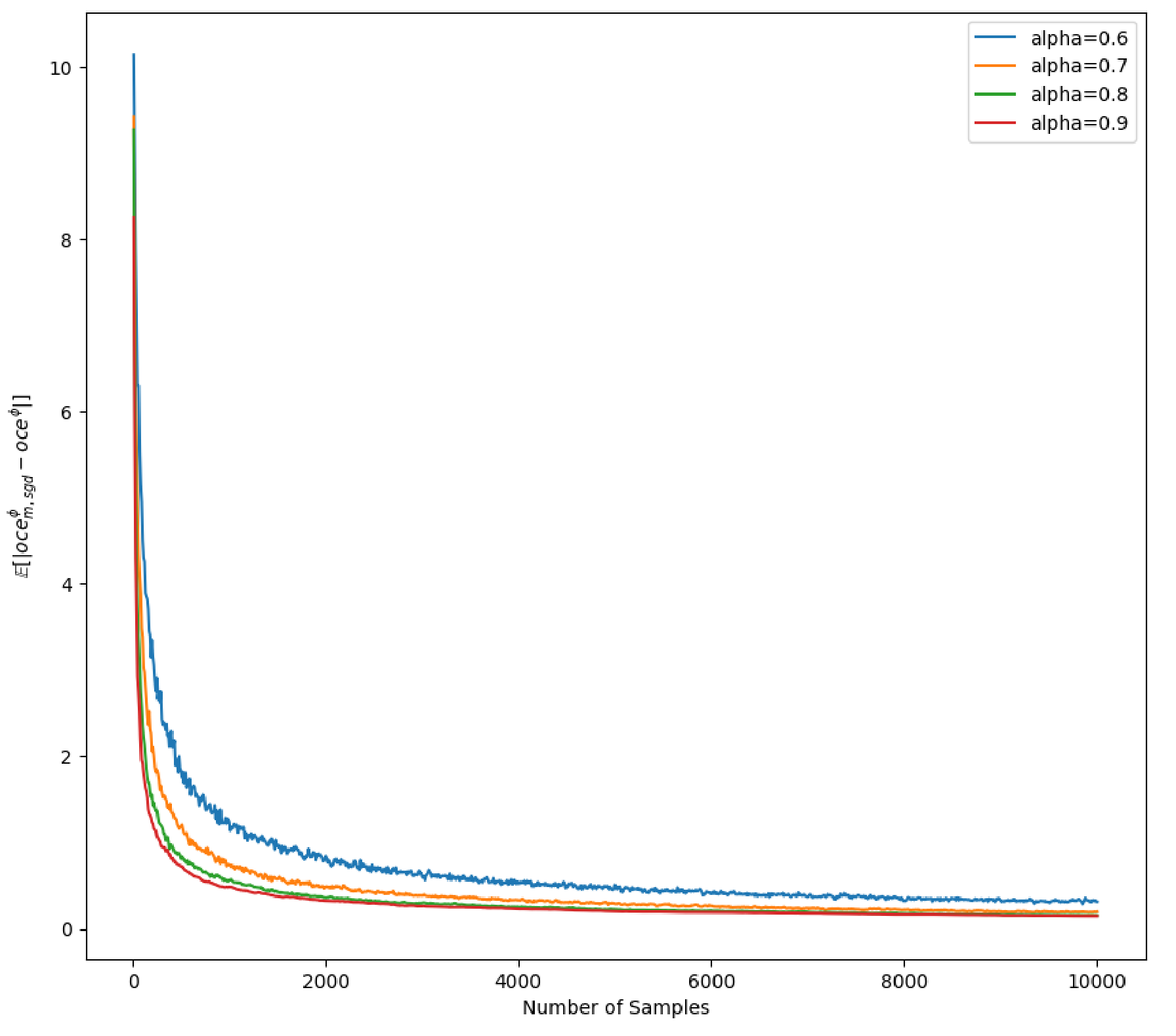}
        \caption{$\mathbb{E}[|oce^{\phi}_{k,\text{sgd}}-oce^{\phi}(L)|]$}
        \label{fig:subfig4}
    \end{subfigure}
    \caption{Errors in estimation of OCE risk and its minimizer, when the underlying distribution is $\mathcal{N}(0.5,5^2)$. OCE risk minimizer is estimated using \eqref{eq:sto-approx}, while OCE risk is estimated using \eqref{eq:oce-sgd}. The step size $\gamma_j= \frac{10}{j^{\alpha}}$ and $t_{0} = 1$. The results are averages over $1000$ independent replications.}
    \label{fig:streaming}
\end{figure}

\paragraph{Credit Risk Model.}

In this experiment, we follow the credit risk model, which is described next. Suppose an investor 's portfolio has $m$ positions, with each position subject to some risk of defaulting causing loss to the investor. The total loss is $L = \sum_{i=1}^{m} v_{i}D_{i}$, with $D_{i}$ being an indicator variable which is $1$ if the $i^{th}$ position defaults and $0$ otherwise and $v_{i} > 0$ is the fractional loss associated with the $i^{th}$ position. In order to quantify this, let $D_{i} = \mathbb{I}\{R_{i} > r_{i}\}$ where $r_{i} = \Phi^{-1}(1-p_{i})$ and $p_{i} = 0.05$ are the threshold risk and marginal default probability of the $i^{th}$ position. Moreover, $R_{i}$, the r.v. corresponding to the defaulting risk of the $i^{th}$ position is determined by the following factor model:
 For $i = 1,...,m, d < m$, 
\begin{equation*}
    R_{i} = A_{i,0}\epsilon_{i} + \sum_{j=1}^{d} A_{i,j}Z_{j}
\textrm{ with }
A_{i,0}^2 + ... + A_{i,d}^2 = 1, \, A_{i,0} > 0, A_{i,j} \geq 0.
\end{equation*}
Here, $Z_{1},...,Z_{d}$ are the systematic risk variables and $\epsilon_{1},...,\epsilon_{d}$ are the idiosyncratic risk variables and all of them are assumed to be distributed as $\mathbb{N}(0,1)$. The parameters $A_{i,j}$ denote the cross-coupling coefficients. For testing our OCE estimator on this model, we decided to use the setup described in \cite{dunkel2010stochastic}: Let the number of positions $m = 25$, fractional losses $v_1 =...= v_5 = 1.00, v_6 =...= v_{10} =
1.25, v_{11} =...= v_{15} = 1.50, v_{16} =...= v_{20} = 1.75,
v_{21} =...= v_{25} = 2.00$. $p_{i} = 0.05$ for all positions.The coupling parameters $A_{i,j}$ are given as $A_{1,1}=...=A_{5,1}=0.1$, $A_{6,2}=...=A_{10,2}=0.1$, $A_{11,3}=...=A_{15,3} = 0.1$, $A_{16,4}=...=A_{20,4}=0.1$, $A_{21,5}=...=A_{25,5}=0.1$, $A_{i,6} = 0.1$ and $A_{i,j} = 0$ otherwise. The distutility function we choose was $\phi(t) = t + \frac{t^2}{2}$. Under these values, it is easy to see that from \Cref{tab:OCE_risks}, that $e^{*} = 1.875$, and $oce^{\phi}(L) = 3.28515625$. 

\Cref{fig:subfig5} and \Cref{fig:subfig6} presents the plots of $\EE{(\bar{t}_{k}-e^{*})^2}$ and $\EE{|oce^{\phi}_{k,sgd}-oce^{\phi}(L)|}$ (respectively) as a function of the number of samples ($k$). The reported result represent the average of $1000$ independent replications. From \label{fig:cred_risk}, we can see the rapid progress of our estimators in \eqref{eq:oce-sgd} to their true values under the credit risk model setup.
\begin{figure}[htbp]
    \centering
    \begin{subfigure}{0.45\textwidth}
        \centering
        \includegraphics[height=1.7in]{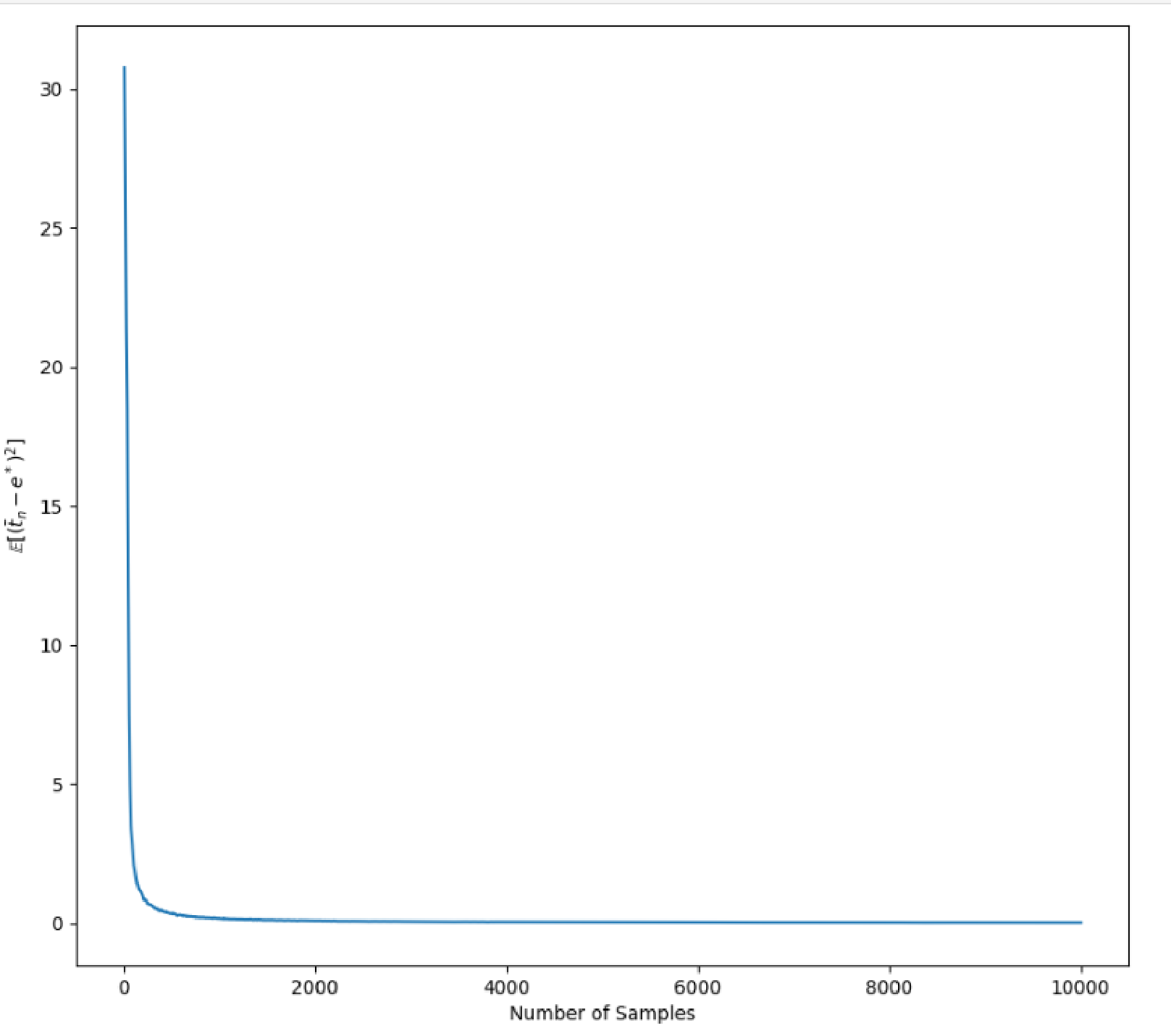}
        \caption{$\mathbb{E}[(\bar{t}_{k}-e^{*})^2]$}
        \label{fig:subfig5}
    \end{subfigure}
    \begin{subfigure}{0.45\textwidth}
        \centering
        \includegraphics[height=1.7in]{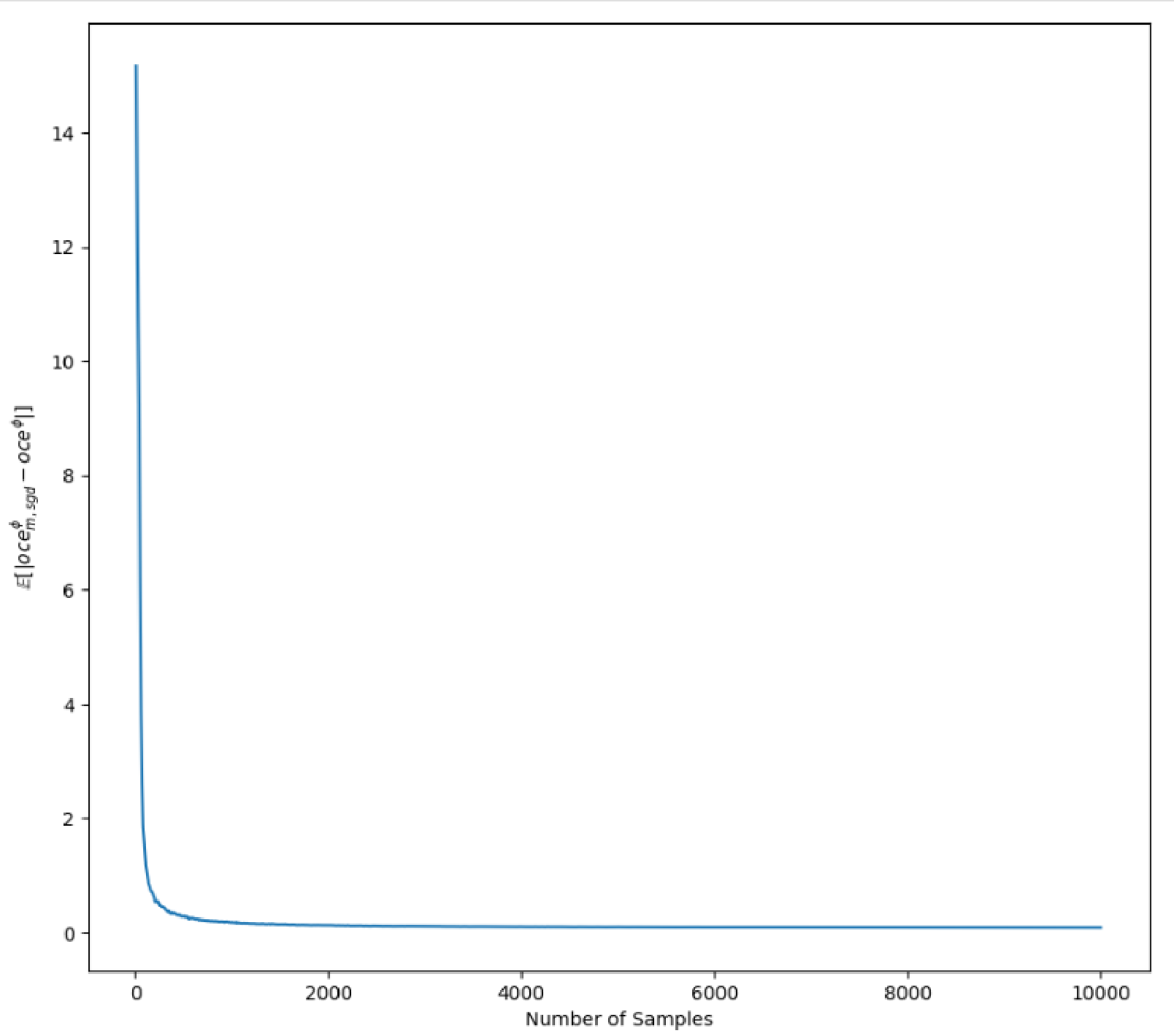}
        \caption{$\mathbb{E}[|oce^{\phi}_{k,\text{sgd}}-oce^{\phi}(L)|]$}
        \label{fig:subfig6}
    \end{subfigure}
    \caption{: Errors in estimation of OCE risk and its minimizer, under the credit risk model. \eqref{eq:sto-approx}, while OCE risk is estimated using \eqref{eq:oce-sgd}. The step size $\gamma_j= \frac{100}{j^{\alpha}}$ and $t_{0} = 1$. The results are averages over $1000$ independent replications.}
    \label{fig:cred_risk}
\end{figure}


\section{Conclusions and future work}
\label{sec:conclusions}
        We addressed the problem of OCE risk estimation from i.i.d. samples of the underlying loss distribution. We first considered an sample average OCE risk estimator, and derive a mean-squared error bound. We also derived a concentration bound for the sample average estimator when the underlying loss distribution is sub-Gaussian, and the disutility function is strongly convex and smooth. This concentration bound is useful in OCE risk-aware bandit applications. Finally, we also considered a stochastic root-finding based OCE risk estimator, and derived its finite sample guarantees.

        For future work, OCE risk estimation with Markovian samples remains unaddressed. OCE risk optimization in a risk-sensitive reinforcement learning framework is another interesting future research direction.

\bibliographystyle{plain}
\bibliography{refs_oce}

\begin{thebibliography}{10}

\bibitem{artzner1999coherent}
P.~Artzner, F.~Delbaen, J.~Eber, and D.~Heath.
\newblock {Coherent measures of risk}.
\newblock {\em Mathematical finance}, 9(3):203--228, 1999.

\bibitem{moulines2011long}
Francis Bach and Eric Moulines.
\newblock Non-asymptotic analysis of stochastic approximation algorithms for
  machine learning.
\newblock In {\em {Neural Information Processing Systems (NIPS)}}, pages~--,
  Spain, 2011.

\bibitem{bental2007oce}
A.~Ben-Tal and M.~Teboulle.
\newblock An old-new concept of convex risk measures: The optimized certainty
  equivalent.
\newblock {\em Mathematical Finance}, 17:449--476, 02 2007.

\bibitem{ben1986expected}
Aharon Ben-Tal and Marc Teboulle.
\newblock Expected utility, penalty functions, and duality in stochastic
  nonlinear programming.
\newblock {\em Management Science}, 32(11):1445--1466, 1986.

\bibitem{bhat2019concentration}
Sanjay~P Bhat and L.~A. Prashanth.
\newblock Concentration of risk measures: A {Wasserstein} distance approach.
\newblock {\em Advances in Neural Information Processing Systems},
  32:11762--11771, 2019.

\bibitem{brown2007large}
D.~B. Brown.
\newblock Large deviations bounds for estimating conditional value-at-risk.
\newblock {\em Operations Research Letters}, 35(6):722--730, 2007.

\bibitem{dunkel2010stochastic}
J.~Dunkel and S.~Weber.
\newblock Stochastic root finding and efficient estimation of convex risk
  measures.
\newblock {\em Operations Research}, 58(5):1505--1521, 2010.

\bibitem{durrett2019probability}
Rick Durrett.
\newblock {\em Probability: theory and examples}, volume~49.
\newblock Cambridge university press, 2019.

\bibitem{follmer2002convex}
H.~F{\"o}llmer and A.~Schied.
\newblock Convex measures of risk and trading constraints.
\newblock {\em Finance and stochastics}, 6(4):429--447, 2002.

\bibitem{gupte2023optimization}
S.~Gupte, Prashanth L.A., and Sanjay~P Bhat.
\newblock Optimization of utility-based shortfall risk: A non-asymptotic
  viewpoint.
\newblock {\em arXiv preprint arXiv:2310.18743}, 2023.

\bibitem{hamm2013stochastic}
A.~Hamm, T.~Salfeld, and S.~Weber.
\newblock Stochastic root finding for optimized certainty equivalents.
\newblock In {\em 2013 Winter Simulations Conference (WSC)}, pages 922--932,
  2013.

\bibitem{hegde2021ubsr}
V.~Hegde, A.~S. Menon, L.~A. Prashanth, and K.~Jagannathan.
\newblock {Online Estimation and Optimization of Utility-Based Shortfall Risk}.
\newblock Papers 2111.08805, arXiv.org, November 2021.

\bibitem{kagrecha2019distribution}
A.~Kagrecha, J.~Nair, and K.~Jagannathan.
\newblock Distribution oblivious, risk-aware algorithms for multi-armed bandits
  with unbounded rewards.
\newblock In {\em Advances in Neural Information Processing Systems}, pages
  11269--11278, 2019.

\bibitem{prashanth2022wasserstein}
Prashanth L.A. and Sanjay~P. Bhat.
\newblock A wasserstein distance approach for concentration of empirical risk
  estimates.
\newblock {\em Journal of Machine Learning Research}, 23(238):1--61, 2022.

\bibitem{lee2020oce}
J.~Lee, S.~Park, and J.~Shin.
\newblock Learning bounds for risk-sensitive learning.
\newblock In {\em Advances in Neural Information Processing Systems},
  volume~33, pages 13867--13879, 2020.

\bibitem{markowitz1952portfolio}
H.~Markowitz.
\newblock Portfolio selection.
\newblock {\em The Journal of Finance}, 7(1):77--91, 1952.

\bibitem{williamson2020cvarConc}
Z.~Mhammedi, B.~Guedj, and R.~C. Williamson.
\newblock Pac-bayesian bound for the conditional value at risk.
\newblock In H.~Larochelle, M.~Ranzato, R.~Hadsell, M.~F. Balcan, and H.~Lin,
  editors, {\em Advances in Neural Information Processing Systems}, volume~33,
  pages 17919--17930. Curran Associates, Inc., 2020.

\bibitem{moulines2011non}
E.~Moulines and F.~Bach.
\newblock Non-asymptotic analysis of stochastic approximation algorithms for
  machine learning.
\newblock {\em Advances in neural information processing systems}, 24:451--459,
  2011.

\bibitem{pandey2021estimation}
A.~K. Pandey, L.~A. Prashanth, and S.~P. Bhat.
\newblock Estimation of spectral risk measures.
\newblock In {\em AAAI Conference on Artificial Intelligence}, 2021.

\bibitem{prashanth2016cumulative}
L.~A. Prashanth, J.~Cheng, M.~C. Fu, S.~I. Marcus, and C.~Szepesv{\'a}ri.
\newblock {Cumulative prospect theory meets reinforcement learning: prediction
  and control}.
\newblock In {\em {International Conference on Machine Learning}}, pages
  1406--1415, 2016.

\bibitem{prashanth2019concentration}
L.~A. Prashanth, K.~Jagannathan, and R.~K. Kolla.
\newblock {Concentration bounds for CVaR estimation: The cases of light-tailed
  and heavy-tailed distributions}.
\newblock In {\em {International Conference on Machine Learning}}, volume 119,
  pages 5577--5586, 2020.

\bibitem{prashanth2020concentration}
L.~A. Prashanth, N.~Korda, and R.~Munos.
\newblock Concentration bounds for temporal difference learning with linear
  function approximation: the case of batch data and uniform sampling.
\newblock {\em Machine Learning}, 110(3):559--618, 2021.

\bibitem{rigollet2023highdimensional}
Philippe Rigollet and Jan-Christian Hütter.
\newblock High-dimensional statistics, 2023.

\bibitem{rockafellar2000optimization}
R.~T. Rockafellar and S.~Uryasev.
\newblock {Optimization of conditional value-at-risk}.
\newblock {\em Journal of risk}, 2:21--42, 2000.

\bibitem{thomas2019concentration}
P.~Thomas and E.~Learned-Miller.
\newblock Concentration inequalities for conditional value at risk.
\newblock In {\em International Conference on Machine Learning}, pages
  6225--6233, 2019.

\bibitem{vershynin2018high}
R.~Vershynin.
\newblock {\em High-dimensional probability: An introduction with applications
  in data science}, volume~47.
\newblock Cambridge university press, 2018.

\bibitem{wainwright2019high}
M.~J. Wainwright.
\newblock {\em High-dimensional statistics: A non-asymptotic viewpoint},
  volume~48.
\newblock Cambridge university press, 2019.

\bibitem{wang2010deviation}
Y.~Wang and F.~Gao.
\newblock Deviation inequalities for an estimator of the conditional
  value-at-risk.
\newblock {\em Operations Research Letters}, 38(3):236--239, 2010.

\end{thebibliography}


\appendix

\section{Proof of \Cref{thm:mse-estar}}

\begin{proof}
 \label{proof:thm:mse-estar}
   Using definitions \eqref{eq:phi-diff} and \eqref{eq:phin-diff}, we have

  \begin{equation}
    \frac{1}{n} \sum_{i=1}^{n} (\phi'(X_i - e^{*}) - \phi'(X_i - \hat{e}_{n}) ) = \frac{1}{n} \sum_{i=1}^{n} \Bigl( \phi'(X_i - e^{*}) -  E[\phi'(X_i-e^{*})] \Bigr)
  \end{equation}

We consider two cases for the analysis.

\textbf{Case 1:} ($\hat{e}_{n} > e^{*}$)

\[
\phi'(X_i - e^{*}) - \phi'(X_i - \hat{e}_{n}) \geq \mu(\hat{e}_{n} - e^{*}) \quad \text{(w.p. 1)}
\]
$$ \Rightarrow \frac{1}{n} \sum_{i=1}^{n} \Bigl( \phi'(X_i - e^{*}) -  E[\phi'(X_i-e^{*})] \Bigr) \geq \mu(\hat{e}_{n} - e^{*}), $$  
where we used the fact that $\phi$ is $\mu$-strongly convex.

\textbf{Case 2:} ($\hat{e}_{n} < e^{*}$)
\[
\phi'(X_i - e^{*}) - \phi'(X_i - \hat{e}_{n}) \geq \mu(\hat{e}_{n} - e^{*}) \quad \text{(w.p. 1)}
\]
    $$ \Rightarrow \frac{1}{n} \sum_{i=1}^{n} \Bigl(   E[\phi'(X_i-e^{*})] - \phi'(X_i - e^{*}) \Bigr) \geq \mu(  e^{*} - \hat{e}_{n}) $$ 
Thus, 
 \begin{equation}
    |e^{*} - \hat{e}_{n}| \leq \Bigl| \frac{1}{n\mu} \sum_{i=1}^{n} \Bigl(  \phi'(X_i - e^{*}) - E[\phi'(X_i-e^{*})]  \Bigr) \Bigr| 
    \label{eq:e*-en}
  \end{equation} 

Let $Z_{i} = \phi'(X_i - e^{*}) - \mathbb{E}[\phi'(X_i-e^{*})]$. Note that $\mathbb{E}[Z_{i}] = 0$. Squaring \ref{eq:e*-en} and taking expectations, we obtain
\begin{align*}
 \mathbb{E}[(e^{*} - \hat{e}_{n})^2] &\leq \frac{1}{n^2\mu^2} \mathbb{E}\bigl[(\sum_{i=1}^{n} Z_{i})^2\bigr] \\ &\leq \frac{1}{n^2\mu^2} \mathbb{E}\bigl[\sum_{i=1}^{n} Z_{i}^2 + \sum_{i,j=1}^{n}Z_{i}Z_{j}\bigr] \\ &\leq \frac{1}{n^2\mu^2} \sum_{i=1}^{n} \mathbb{E}[Z_{i}^2] \quad \text{(cross terms vanish since iid)} \\ &\leq \frac{1}{n^2\mu^2} \sum_{i=1}^{n} \mathbb{E}[(\phi'(X_i - e^{*}))^2] + (\mathbb{E}[\phi'(X_i-e^{*})])^2 - 2(\mathbb{E}[\phi'(X_i-e^{*})])^2 \\ &\leq \frac{1}{n^2\mu^2} \sum_{i=1}^{n} (\mathbb{E}[(\phi'(X_i - e^{*}))^2] - 1) \quad (\EE{\phi'(X-e^{*})} = 1)\numberthis \label{eq:s11}
\end{align*}

Using $\phi'(0) = 1$ and $L$-Lipschitzness of $\phi'$, we have 
\[|\phi'(X_{i}-e^{*}) - 1| \leq L|X-e^{*}|.\] 
Squaring on both sides above and taking expectations, we obtain
\begin{align}
\mathbb{E}[(\phi'(X-e^{*}))^2] &\leq (Le^{*})^2 + L^2\mathbb{E}[X^2] - 2L^2e^{*}\mathbb{E}[X] + 1.
\end{align}
The main claim follows by substituting the bound above in \eqref{eq:s11}.
\end{proof}

\section{Proof of \Cref{thm:estar-conc}}
  \label{proof:thm:estar-conc}
  
   For establishing the bound in \Cref{thm:estar-conc}, we require the following result, which shows that a Lipschitz function of a sub-Gaussian r.v. is sub-Gaussian.

\begin{lemma}
\label{lem:lipschitz-subg} 
Let $X$ $\in \mathbb{R}$ be sub-Gaussian with parameter $\sigma$. Then if $f : \mathbb{R} \to \mathbb{R}$ is $L$-Lipschitz, $f(X)$ is sub-Gaussian with parameter $2L\sigma$. 
\end{lemma}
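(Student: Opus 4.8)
The plan is to verify directly the moment generating function (MGF) bound that defines sub-Gaussianity, i.e. to show that $\mathbb{E}\!\left[\exp\!\big(\lambda(f(X)-\mathbb{E}[f(X)])\big)\right]\le \exp\!\big(2L^2\sigma^2\lambda^2\big)$ for every $\lambda\in\mathbb{R}$, which is exactly the sub-Gaussian bound with parameter $2L\sigma$ since $(2L\sigma)^2/2=2L^2\sigma^2$. The engine of the argument is the pointwise Lipschitz control $|f(X)-f(\mu)|\le L\,|X-\mu|$, combined with the elementary inequality $e^{|a|}\le e^{a}+e^{-a}$. This lets me trade the two-sided Lipschitz bound for two one-sided exponential moments of $X$, each of which is controlled by the sub-Gaussian hypothesis.

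First I would work with $f$ centered at the deterministic point $f(\mu)$ rather than at its mean. For any $\lambda$, Lipschitzness gives $e^{\lambda(f(X)-f(\mu))}\le e^{|\lambda|L|X-\mu|}\le e^{\lambda L(X-\mu)}+e^{-\lambda L(X-\mu)}$; taking expectations and invoking the sub-Gaussian bound $\mathbb{E}\big[e^{\pm \lambda L (X-\mu)}\big]\le e^{L^2\sigma^2\lambda^2/2}$ in both directions yields $\mathbb{E}\big[e^{\lambda(f(X)-f(\mu))}\big]\le 2\,e^{L^2\sigma^2\lambda^2/2}$. This already exhibits the correct quadratic rate, but it has two defects: it is centered at $f(\mu)$ rather than at $\mathbb{E}[f(X)]$, and it carries a spurious multiplicative factor $2$.

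The remaining and most delicate step is to absorb both defects into the exponent, upgrading the parameter from $\sqrt{L^2\sigma^2}$ to $2L\sigma$. For the re-centering I would use that the mean shift is small, namely $|\mathbb{E}[f(X)]-f(\mu)|\le L\,\mathbb{E}|X-\mu|\le L\sigma$, since $\mathbb{E}|X-\mu|\le\sqrt{\mathrm{Var}(X)}\le\sigma$ for a sub-Gaussian r.v.; writing $f(X)-\mathbb{E}[f(X)]=(f(X)-f(\mu))-b$ with $|b|\le L\sigma$ converts the bound above into $\mathbb{E}\big[e^{\lambda(f(X)-\mathbb{E}[f(X)])}\big]\le 2\,e^{|\lambda|L\sigma+L^2\sigma^2\lambda^2/2}$. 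To eliminate the prefactor $2$ and the linear term I would argue in two regimes: for $|\lambda|$ large the gap between $L^2\sigma^2\lambda^2/2$ and the target $2L^2\sigma^2\lambda^2$ dominates both $\log 2$ and the linear term, so the target bound holds outright; for $|\lambda|$ small I would instead exploit that $f(X)-\mathbb{E}[f(X)]$ is genuinely centered, so its log-MGF is $O(\lambda^2)$ near the origin, with the coefficient controlled by $\mathrm{Var}(f(X))\le L^2\,\mathrm{Var}(X)\le L^2\sigma^2$ (via the i.i.d.-copy identity $\mathrm{Var}(f(X))=\frac{1}{2}\mathbb{E}[(f(X)-f(X'))^2]$ and Lipschitzness). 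Patching the two regimes delivers the clean sub-Gaussian bound with parameter $2L\sigma$. I expect this constant-tracking across the two regimes, rather than any single inequality, to be the main obstacle, precisely because the factor $2$ and the re-centering are what force the parameter up from $\sqrt{2}\,L\sigma$ to the stated $2L\sigma$.
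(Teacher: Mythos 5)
Your reduction to the bound $\mathbb{E}\bigl[e^{\lambda(f(X)-\mathbb{E}[f(X)])}\bigr]\le 2\exp\bigl(|\lambda|L\sigma+L^{2}\sigma^{2}\lambda^{2}/2\bigr)$ is correct (the Lipschitz step, the inequality $e^{|a|}\le e^{a}+e^{-a}$, and the re-centering via $\mathrm{Var}(X)\le\sigma^{2}$ are all fine), and your large-$|\lambda|$ regime works: writing $t=|\lambda|L\sigma$, the target $\exp(2L^{2}\sigma^{2}\lambda^{2})$ follows whenever $\log 2+t\le \tfrac{3}{2}t^{2}$, i.e.\ for $t\gtrsim 1.09$. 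The genuine gap is the complementary regime $0<|\lambda|L\sigma\lesssim 1.09$. This is a \emph{fixed-length} interval, not an asymptotic neighbourhood of the origin, and the justification you offer there --- that the log-MGF of the centered variable $Y=f(X)-\mathbb{E}[f(X)]$ is $O(\lambda^{2})$ with coefficient controlled by $\mathrm{Var}(Y)\le L^{2}\sigma^{2}$ --- is only a Taylor statement at $\lambda=0$. To control $\psi(\lambda)=\log\mathbb{E}[e^{\lambda Y}]$ on a whole interval you need $\sup\psi''$ there, which is the variance of $Y$ under exponentially tilted measures; this is \emph{not} bounded by $\mathrm{Var}(Y)$ alone (a centered variable of given variance can have an arbitrarily large MGF at any fixed $\lambda\neq 0$). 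So the ``patching'' step, which you yourself flag as the crux, is exactly what is missing: closing it requires quantitative higher-moment control of $Y$ up to the crossover point, after which the claim that the constants land at exactly $2L\sigma$ would still have to be verified.

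For contrast, the paper avoids this issue by a different route: it converts sub-Gaussian moment growth into the Orlicz-type bound $\mathbb{E}[e^{\lambda^{2}X^{2}}]\le e^{16e\sigma^{2}\lambda^{2}}$, symmetrizes with an independent copy $X'$ via Jensen, uses $(f(X)-f(X'))^{2}\le L^{2}(X-X')^{2}\le 2L^{2}(X^{2}+X'^{2})$ to get $\mathbb{E}[e^{(f(X)-\mathbb{E}[f(X)])^{2}/K^{2}}]\le 2$, and then cites the standard equivalence (Vershynin, Prop.\ 2.5.2); the proof of that equivalence is precisely the machinery your sketch lacks, handling small $\lambda$ via centering together with the pointwise inequality $e^{x}\le x+e^{x^{2}}$. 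Alternatively, your own i.i.d.-copy idea can be upgraded to finish the whole lemma with no case split: by Jensen, $\mathbb{E}[e^{\lambda Y}]\le \mathbb{E}[e^{\lambda(f(X)-f(X'))}]$; the variable $f(X)-f(X')$ is symmetric, so its MGF is a series in even moments, each dominated (since $|f(X)-f(X')|\le L|X-X'|$ a.s.\ and $X-X'$ is also symmetric) by the corresponding term for $L(X-X')$, whence
\begin{equation*}
\mathbb{E}\bigl[e^{\lambda(f(X)-f(X'))}\bigr]\le \mathbb{E}\bigl[e^{\lambda L(X-X')}\bigr]\le e^{L^{2}\sigma^{2}\lambda^{2}},\qquad\forall\lambda,
\end{equation*}
which is sub-Gaussianity with parameter $\sqrt{2}L\sigma\le 2L\sigma$, uniformly in $\lambda$.
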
 

\begin{proof}
By Lemma 54 of \cite{prashanth2022wasserstein}, if $X$ is a sub-Gaussian r.v. with parameter $\sigma$, then
\begin{equation}
\label{eq:subg_moments}
(\mathbb{E} |X|^{k})^{\frac{1}{k}} \leq 2\sigma \sqrt{k}, \forall k \geq 1
\end{equation}

Following a proof technique analogous to that of the proof of Proposition 2.5.2 in \cite{vershynin2018high}, we have
\begin{align*}
\mathbb{E}[e^{\lambda^{2}X^{2}}]  &= \mathbb{E} \left[1 + \sum_{p=1}^{\infty} \frac{(\lambda^{2}X^{2})^{p}}{p!}\right] \\
&= 1 + \sum_{p=1}^{\infty} \frac{\lambda^{2p}\mathbb{E}[X^{2p}]}{p!} \\
&\leq \sum_{p=0}^{\infty} \lambda^{2p} (2\sigma)^{2p}2^{p}e^{p} \quad \text{(\eqref{eq:subg_moments} and } p! \geq \left(\frac{p}{e}\right)^p \text{)} \\
&\leq \sum_{p=0}^{\infty} (8e\sigma^2\lambda^2)^{p} \\ &= \frac{1}{1-8e\sigma^2\lambda^2} \quad \text{(if } |\lambda| \leq \frac{1}{\sqrt{8e}\sigma} \text{)} \\ 
&\leq e^{16e\sigma^2\lambda^2} \quad \text{(if } |\lambda| \leq \frac{1}{4\sqrt{e}\sigma} \text{)},
\numberthis\label{eq:X2_to_e}
\end{align*}
where the last step follows from the fact that $\frac{1}{1-x} \leq e^{2x} \quad \forall x \in [0,1/2]$.

Setting $\lambda = \frac{\sqrt{ln2}}{4\sqrt{e}\sigma}$, we obtain
\begin{equation}
\label{eq:X_K3}
\mathbb{E}\left[e^{X^2/K_{3}^{2}}\right] \leq 2 \quad \text{with } K_3^{2} = \frac{16e\sigma^2}{\ln 2}.
\end{equation}

Employing the proof technique of \url{https://mathoverflow.net/questions/442500/lipschitz-function-of-subgaussian-random-variable}, we have
\begin{align*}
\mathbb{E}\left[ e^{\lambda^{2}(f(X) - \mathbb{E}[f(X)])^{2}} \right] &= \mathbb{E}\left[ e^{\lambda^{2}(f(X) - \mathbb{E}[f(X')])^{2}} \right] \quad \text{($X'$ is an independent copy of $X$)} \\
&\leq \mathbb{E}\left[ e^{\lambda^{2}(f(X) - f(X'))^{2}} \right] \quad \text{(Jensen's inequality)} \\
&\leq \mathbb{E}\left[ e^{\lambda^{2}L^{2}(X - X')^{2}} \right] \quad \text{(Lipschitzness of $f$)} \\
&\leq \mathbb{E}\left[ e^{\lambda^{2}L^{2}(2X^{2} + 2X'^{2})} \right]
\\
&\leq \left(\mathbb{E}\left[ e^{2\lambda^{2}L^{2}X^{2}} \right]\right)^{2} \\
&\leq \mathbb{E}\left[ e^{4\lambda^{2}L^{2}X^{2}} \right] \quad \text{(Jensen's inequality)}
\numberthis\label{eq:fX&X}
\end{align*}
Setting $\lambda = \frac{\sqrt{ln2}}{8\sqrt{e}\sigma L}$, we obtain 
\begin{equation}
\mathbb{E}\left[e^{(f(X)-\mathbb{E}[f(X)])/K_{3}'^{2}}\right] \leq \mathbb{E}\left[e^{X^2/K_{3}^{2}}\right] \leq 2 \quad \text{with } K_{3}'^{2} = \frac{64e\sigma^2L^{2}}{\ln 2}.
\label{eq:fXleq2}
\end{equation}
We can now infer that $f(X) - \mathbb{E}[f(X)]$ is $4L^2\sigma^2$-sub-Gaussian by using Proposition 2.5.2 of \cite{vershynin2018high}.
\end{proof}

Next, we prove \Cref{thm:estar-conc} by using the result in the lemma above.
\begin{proof}\textbf{\textit{(\Cref{thm:estar-conc})}}
 \ \\ 
Recall from \eqref{eq:e*-en}, we have
 \begin{equation}
    |e^{*} - \hat{e}_{n}| \leq \Bigl| \frac{1}{n\mu} \sum_{i=1}^{n} \Bigl(  \phi'(X_i - e^{*}) - \EE{\phi'(X_i-e^{*})}  \Bigr) \Bigr|. \label{eq:sm12}
  \end{equation} 

Using Lemma \ref{lem:lipschitz-subg}, $\phi'(X_i - e^{*}) - \EE{\phi'(X_i-e^{*})}$ is $4L^{2}\sigma^{2}$ sub-Gaussian $\forall$ $i$. The main claim now follows by applying Hoeffding's inequality (cf. Proposition 2.1 of \cite{wainwright2019high}) to the RHS of \eqref{eq:sm12}.
\end{proof}

\section{Proof of Theorem \ref{thm:mse-oce}}

Using smoothness of $\phi$, we derive two useful bounds in the lemmas below. These bounds would be used subsequently int he proof of Theorem \ref{thm:mse-oce}.

\begin{lemma}
Under conditions of \Cref{thm:mse-oce}, we have
\begin{equation}
-L(e^{*}-\hat{e}_{n})^{2} + (e^{*}-\hat{e}_{n}) \leq (e^{*}-\hat{e}_{n}) \frac{\sum_{i=1}^{n} \phi'(X_{i}-e^{*})}{n} \leq L(e^{*}-\hat{e}_{n})^{2} + (e^{*}-\hat{e}_{n}).
\label{lem:1}
\end{equation}
\end{lemma}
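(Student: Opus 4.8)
The plan is to collapse the two-sided inequality into a single absolute-value bound, and then control the remaining quantity using the Lipschitz continuity of $\phi'$. Writing $S_n = \frac{1}{n}\sum_{i=1}^{n}\phi'(X_i - e^{*})$, I would first subtract the common term $(e^{*}-\hat{e}_{n})$ from every part of the claimed chain. This shows the assertion is equivalent to
\[
-L(e^{*}-\hat{e}_{n})^{2} \leq (e^{*}-\hat{e}_{n})(S_n - 1) \leq L(e^{*}-\hat{e}_{n})^{2},
\]
i.e. to the single estimate $\bigl|(e^{*}-\hat{e}_{n})(S_n - 1)\bigr| \leq L(e^{*}-\hat{e}_{n})^{2}$. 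Hence it suffices to prove $|S_n - 1| \leq L\,|e^{*}-\hat{e}_{n}|$.

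To establish that, I would invoke the empirical first-order optimality condition \eqref{eq:phin-diff}, namely $\frac{1}{n}\sum_{i=1}^{n}\phi'(X_i - \hat{e}_{n}) = 1$. Substituting this for the constant $1$ rewrites the deviation as an averaged difference,
\[
S_n - 1 = \frac{1}{n}\sum_{i=1}^{n}\bigl(\phi'(X_i - e^{*}) - \phi'(X_i - \hat{e}_{n})\bigr).
\]
Since \Cref{ass:phi-smooth} ($L$-smoothness, stated with absolute values) is exactly the statement that $\phi'$ is $L$-Lipschitz, each summand is bounded by $L\bigl|(X_i - e^{*}) - (X_i - \hat{e}_{n})\bigr| = L\,|e^{*}-\hat{e}_{n}|$. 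The triangle inequality then yields $|S_n - 1| \leq L\,|e^{*}-\hat{e}_{n}|$.

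Multiplying this by $|e^{*}-\hat{e}_{n}|$ gives the reduced bound $\bigl|(e^{*}-\hat{e}_{n})(S_n-1)\bigr| \leq L(e^{*}-\hat{e}_{n})^{2}$, and unwinding the initial reduction recovers the two-sided inequality. I expect no genuine obstacle here: the only subtlety is recognizing that the absolute-value form of the smoothness assumption furnishes an $L$-Lipschitz derivative directly (without appealing to convexity), so that the per-sample differences telescope cleanly under averaging. Notably, the strong-convexity assumption plays no role in this lemma; it is the empirical optimality condition \eqref{eq:phin-diff} together with the Lipschitz property of $\phi'$ that does all the work.
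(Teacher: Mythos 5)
Your proof is correct and follows essentially the same route as the paper's: both rest on the empirical optimality condition \eqref{eq:phin-diff} together with the $L$-Lipschitzness of $\phi'$ implied by \Cref{ass:phi-smooth}, averaged over the samples. The only difference is cosmetic --- by working with the absolute value $\bigl|(e^{*}-\hat{e}_{n})(S_n-1)\bigr|$ you sidestep the sign-based case analysis the paper performs when multiplying its two-sided bound by $(e^{*}-\hat{e}_{n})$, which is a slight streamlining rather than a different argument.
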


\begin{proof}
By Lipschitzness of $\phi'$, 

\begin{equation}
    -L|\hat{e}_{n}-e^{*}| + \phi'(X_{i}-\hat{e}_{n}) \leq \phi'(X_{i} - e^{*}) \leq L|\hat{e}_{n}-e^{*}| + \phi'(X_{i}-\hat{e}_{n}) 
\end{equation}

\begin{equation}
   \Rightarrow -L|\hat{e}_{n}-e^{*}| + \frac{\sum_{i=1}^{n}\phi'(X_{i}-\hat{e}_{n})}{n} \leq \frac{\sum_{i=1}^{n}\phi'(X_{i} - e^{*})}{n} \leq L|\hat{e}_{n}-e^{*}| + \frac{\sum_{i=1}^{n}\phi'(X_{i}-\hat{e}_{n})}{n}
\end{equation}

\begin{equation}
   \Rightarrow -L|\hat{e}_{n}-e^{*}| + 1 \leq \frac{\sum_{i=1}^{n}\phi'(X_{i} - e^{*})}{n} \leq L|\hat{e}_{n}-e^{*}| + 1
\end{equation}

Now take 2 cases: $\hat{e}_{n} > e^{*}$ or vice-versa. According to the sign of $(\hat{e}_{n}-e^{*})$ take the relevant inequality, either case will yield:

\begin{equation}
-L(e^{*}-\hat{e}_{n})^{2} + (e^{*}-\hat{e}_{n}) \leq (e^{*}-\hat{e}_{n}) \frac{\sum_{i=1}^{n} \phi'(X_{i}-e^{*})}{n} \leq L(e^{*}-\hat{e}_{n})^{2} + (e^{*}-\hat{e}_{n})
\end{equation}

\end{proof}

\begin{lemma}
Under conditions of \Cref{thm:mse-oce}, we have
\begin{equation}
\frac{-3L\Bigl(e^{*} - \hat{e}_{n} \Bigr)^{2}}{2} + (e^{*} - \hat{e}_{n}) \leq \frac{\sum_{i=1}^{n} \phi(X_{i}-\hat{e}_{n})-\phi(X_{i}-e^{*}) }{n} \leq \frac{3L\Bigl(e^{*} - \hat{e}_{n} \Bigr)^{2}}{2} + (e^{*} - \hat{e}_{n})
\label{lem:2}
\end{equation}
\end{lemma}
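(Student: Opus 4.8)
The plan is to combine the $L$-smoothness of $\phi$ (\Cref{ass:phi-smooth}) with the sandwich bound already established in \eqref{lem:1}. First I would apply the smoothness inequality termwise, taking $x = X_i - e^*$ and $y = X_i - \hat{e}_n$, so that $y - x = e^* - \hat{e}_n$ and $(x-y)^2 = (e^* - \hat{e}_n)^2$. This yields, for each $i$,
\[
\left| \phi(X_i - \hat{e}_n) - \phi(X_i - e^*) - \phi'(X_i - e^*)(e^* - \hat{e}_n) \right| \leq \frac{L}{2}(e^* - \hat{e}_n)^2,
\]
which linearizes the difference $\phi(X_i - \hat{e}_n) - \phi(X_i - e^*)$ about $X_i - e^*$, isolating the term $\phi'(X_i - e^*)(e^* - \hat{e}_n)$ up to a quadratic error of size $\frac{L}{2}(e^* - \hat{e}_n)^2$.

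Averaging this estimate over $i = 1, \ldots, n$ and using the triangle inequality gives
\[
\left| \frac{\sum_{i=1}^n \left(\phi(X_i - \hat{e}_n) - \phi(X_i - e^*)\right)}{n} - (e^* - \hat{e}_n)\frac{\sum_{i=1}^n \phi'(X_i - e^*)}{n} \right| \leq \frac{L}{2}(e^* - \hat{e}_n)^2.
\]
Thus the averaged difference I wish to bound deviates from the quantity $(e^* - \hat{e}_n)\frac{\sum_{i=1}^n \phi'(X_i - e^*)}{n}$ by at most $\frac{L}{2}(e^* - \hat{e}_n)^2$. At this point I would invoke \eqref{lem:1}, which already sandwiches that same quantity between $-L(e^* - \hat{e}_n)^2 + (e^* - \hat{e}_n)$ and $L(e^* - \hat{e}_n)^2 + (e^* - \hat{e}_n)$.

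Combining the two estimates finishes the proof: adding the smoothness error $\frac{L}{2}(e^* - \hat{e}_n)^2$ to the upper bound of \eqref{lem:1} produces $\frac{3L}{2}(e^* - \hat{e}_n)^2 + (e^* - \hat{e}_n)$, while subtracting it from the lower bound produces $-\frac{3L}{2}(e^* - \hat{e}_n)^2 + (e^* - \hat{e}_n)$, exactly as claimed. I expect no genuine obstacle here, since the argument is a routine first-order expansion followed by substitution of the previous lemma; the only point demanding care is the bookkeeping of the constant, namely verifying that the $\frac{L}{2}$ from the second-order smoothness term combines with the $L$ coming from \eqref{lem:1} to yield precisely $\frac{3L}{2}$ on each side.
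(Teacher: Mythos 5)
Your proposal is correct and follows essentially the same route as the paper's own proof: apply the $L$-smoothness bound termwise to linearize $\phi(X_i-\hat e_n)-\phi(X_i-e^*)$ around $X_i-e^*$, average over $i$, and then substitute the sandwich bound from \eqref{lem:1} on $(e^*-\hat e_n)\frac{1}{n}\sum_{i=1}^n \phi'(X_i-e^*)$, which combines the $\frac{L}{2}$ and $L$ constants into $\frac{3L}{2}$. The only cosmetic difference is that you phrase the smoothness step as an absolute-value inequality while the paper writes it as a two-sided inequality; the content is identical.
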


\begin{proof}
By L-smoothness of $\phi$,
\begin{align}
   \frac{-L\Bigl(e^{*} - \hat{e}_{n} \Bigr)^{2}}{2} + (e^{*}-\hat{e}_{n})\phi'(X_{i}-e^{*}) & \leq \phi(X_{i}-\hat{e}_{n})-\phi(X_{i}-e^{*}) \\ & \leq \frac{L\Bigl(e^{*} - \hat{e}_{n} \Bigr)^{2}}{2} + (e^{*}-\hat{e}_{n})\phi'(X_{i}-e^{*})
\end{align}

\begin{align}
\frac{-L}{2}(e^{*} - \hat{e}_{n} )^{2} + (e^{*}-\hat{e}_{n}) \frac{\sum_{i=1}^{n} \phi'(X_{i}-e^{*})}{n} & \leq \frac{\sum_{i=1}^{n} \phi(X_{i}-\hat{e}_{n})-\phi(X_{i}-e^{*}) }{n} \\ & \leq \frac{L(e^{*} - \hat{e}_{n} )^{2}}{2} + (e^{*}-\hat{e}_{n}) \frac{\sum_{i=1}^{n} \phi'(X_{i}-e^{*})}{n}
\end{align}

And now just substitute the results obtained from  Lemma \ref{lem:1}.

\end{proof}
We now prove Theorem \ref{thm:mse-oce}.
\begin{proof}
\label{proof:thm:mse-oce}
Using the definition of $\ocen$, we have
\begin{equation}
   \Rightarrow \ocen = e^{*} + \sum_{i=1}^{n} \frac{\phi(X_{i} - e^{*})}{n} +  (\hat{e}_{n} - e^{*}) + \sum_{i=1}^{n} \frac{\phi(X_{i} - \hat{e}_{n}) - \phi(X_{i} - e^{*}) }{n}.
   \label{eq:ocen_def_diff}
\end{equation}

Using the results of Lemma \ref{lem:1} and Lemma \ref{lem:2}, we obtain
\begin{equation}
\frac{-3L\Bigl(e^{*} - \hat{e}_{n} \Bigr)^{2}}{2} + e^{*} + \frac{\sum_{i=1}^{n} \phi(X_{i}-e^{*})}{n} \leq \ocen \leq \frac{3L\Bigl(e^{*} - \hat{e}_{n} \Bigr)^{2}}{2} + e^{*} + \frac{\sum_{i=1}^{n} \phi(X_{i}-e^{*})}{n}.
\label{eq:oce_n_bound}
\end{equation}
Using $
\oce(X) = e^{*} + E\Bigl[ \phi(X - e^{*}) \Bigr]$, we obtain
\begin{equation}
\begin{split}
-\frac{3L\Bigl(e^{*} - \hat{e}_{n} \Bigr)^{2}}{2} &+  \frac{ \sum_{i=1}^{n} \phi(X_{i} - e^{*}) - E [ \phi(X_{i} - e^{*}) ] }{n} \\
&\leq \ocen - \oce(X) \\
&\leq \frac{3L\Bigl(e^{*} - \hat{e}_{n} \Bigr)^{2}}{2} + \frac{ \sum_{i=1}^{n} \phi(X_{i} - e^{*}) - E [ \phi(X_{i} - e^{*}) ] }{n} 
\end{split}
\label{eq:oce_broken_fin}
\end{equation}.
Recall that from \Cref{eq:e*-en}, we have
\begin{equation}
    |e^{*} - \hat{e}_{n}| \leq \Bigl| \frac{1}{n\mu} \sum_{i=1}^{n} \Bigl(  \phi'(X_i - e^{*}) - \EE{\phi'(X_i-e^{*})}  \Bigr) \Bigr|. 
\end{equation} 

Define $Z_{i} = \phi(X_{i}-e^{*}) - \mathbb{E}[\phi(X_{i}-e^{*})]$ and $Y_{i} = \phi'(X_{i}-e^{*}) - \mathbb{E}[\phi'(X_{i}-e^{*})] = \phi'(X_{i}-e^{*}) - 1 $. Note that $\mathbb{E}[Z_{i}] = 0$ and $\mathbb{E}[Y_{i}] = 0$. Using the fact that $(a+b)^2 \leq 2(a^2 + b^2)$, we obtain
\begin{align*}
 \mathbb{E}\Bigl[(\ocen - \oce(X))^2\Bigr] &\leq 2 \Bigl( \frac{1}{n^2} \EE{\Bigl(\sum_{i=1}^{n}Z_{i}\Bigr)^2} + \frac{9L^2}{4n^4\mu^4}\EE{\Bigl(\sum_{i=1}^{n}Y_{i}\Bigr)^4} \Bigr) \\ &\leq 2 \Bigl( \frac{1}{n^2}\sum_{i=1}^{n}\EE{Z_{i}^2} + \frac{9L^2}{4n^4\mu^4}\EE{\Bigl(\sum_{i=1}^{n}Y_{i}\Bigr)^4} \Bigr) \\ &\leq 2 \Bigl( \frac{1}{n^2}\sum_{i=1}^{n}\EE{Z_{i}^2} + \frac{9L^2}{4n^4\mu^4}\EE{\Bigl(\sum_{i,j,k,l=1}^{n}Y_{i}Y_{j}Y_{k}Y_{l}\Bigr)} \Bigr) \\ &\leq \frac{2}{n}\EE{Z^2} + \frac{9L^2}{2n^4\mu^4} \bigl( n\EE{Y^4} + \binom{n}{2} \times \binom{4}{2} \times (\EE{Y^2})^2 \bigr) \\ &\leq \frac{2}{n}\var(\phi(X-e^{*})) + \frac{27L^2(\EE{Y^2})^2}{2n^2\mu^4} + \frac{9L^2\EE{Y^4}}{2n^3\mu^4},
\end{align*}
where $Z = \phi(X-e^{*}) - \mathbb{E}[\phi(X-e^{*})]$ and $Y = \phi'(X-e^{*}) - \mathbb{E}[\phi'(X-e^{*})] = \phi'(X-e^{*}) - 1$.

Notice that $\EE{Z^2}=\var(\phi(X-e^{*}))$, 
  $\EE{Y^2}=\var(\phi'(X-e^{*})))$, and $\EE{Y^4} < \mathbb{E}[(\phi'(X-e^{*}))^4]$. Each of these expectations are finite and the main claim follows.
\end{proof}

\section{Proof of Lemma \ref{lem:lsmooth-subg}}

\begin{proof}
 \label{proof:lem:lsmooth-subg}
   Consider the r.v. $Z = \phi(X-e^{*}) - \mathbb{E}[\phi(X-e^{*})]$. Using $L$-smoothness of $\phi$, we know
\begin{align*}
    |\phi(X-e^{*}) - (\phi(0) + \phi'(0)(X-e^{*})) | &\leq L(X-e^{*})^2/2 
\end{align*}
\begin{align*}
\Rightarrow |\phi(X-e^{*})-(X-e^{*})| \leq L(X-e^{*})^2/2 \quad \text{(} \phi(0) = 0, \phi'(0) = 1 {)}.
\end{align*}
The bound above can be rewritten as follows:
\begin{equation}
    \phi(X-e^{*}) \leq aX^2 + b|X| + c,
\end{equation}
where  $a = L/2 , b = |Le^{*}-1|,$ and $c = \Bigl|\frac{L(e^{*})^2}{2} - e^{*}\Bigr|$. Notice that $a,b,c$ are non-negative constants.

Now we employ the proof technique adapted from the proof of \cite[Lemma 1.12]{rigollet2023highdimensional}. Using the dominated convergence theorem, we have
\begin{align*}
    &\mathbb{E}[e^{\lambda(\phi(X-e^{*}) - \mathbb{E}[\phi(X-e^{*})])}] \\
    &\leq 1 + \sum_{k=2}^{\infty} \frac{\lambda^{k} \mathbb{E} [\Bigl(\phi(X-e^{*}) - \mathbb{E}[\phi(X-e^{*})]\Bigr)^{k}]}{k!} \\
    &\leq 1 + \sum_{k=2}^{\infty} \frac{ \lambda^{k} 2^{k-1} \Bigl( \mathbb{E} [(\phi(X-e^{*}))^{k}] + (\mathbb{E}[\phi(X-e^{*})])^{k} \Bigr) }{k!} \quad \text{(Jensen's inequality)} \\
    &\leq 1 + \sum_{k=2}^{\infty} \frac{ \lambda^{k} 2^{k} \mathbb{E} [|\phi(X-e^{*})|^{k}]}{k!} \quad \text{(Jensen's inequality and } E[X] \leq E[|X|]) \\
    &\leq 1 + \sum_{k=2}^{\infty} \frac{\lambda^{k} 2^{k} \mathbb{E}[(aX^{2} + b|X| + c)^{k}]}{k!} \\
    &\leq 1 + \sum_{k=2}^{\infty} \frac{ \lambda^{k}2^{k}3^{k-1} \mathbb{E}[a^{k}X^{2k} + b^k|X|^{k}+c^{k}]}{k!} \quad \text{(Jensen's inequality)} \\
    &\leq 1 + \sum_{k=2}^{\infty} \frac{ \lambda^{k}6^{k}\mathbb{E}[a^{k}X^{2k} + b^k|X|^{k}+c^{k}]}{k!} \\
    & \leq 1 + I_{1} + I_{2} + I_{3}.\numberthis\label{eq:sq12}
\end{align*}
By part (II) of Theorem 2.2 in \cite{wainwright2019high}, we have that a r.v. $X$ is sub-exponential if there is a positive number $c_0 > 0$ such that $\mathbb{E}[e^{\lambda(X - \mathbb{E}[X])}] < \infty$ for all $|\lambda| \leq c_0$. 

We shall bound each of $I_{1},I_{2}$ and $I_{3}$ to establish a exponential moment bound for the r.v. $\phi(X-e^{*}) - \mathbb{E}[\phi(X-e^{*})]$.

Since $X$ is a sub-Gaussian r.v. with parameter $\sigma$, we have
\begin{equation}
    \EE{|X|^{k}} \leq (2\sigma^2)^{k/2}k\Gamma(k/2) \quad \forall k \geq 1.
\end{equation}
We now bound $I_1$ as follows:
\begin{align*}
    I_{1} & = \sum_{k=2}^{\infty} \frac{(6\lambda a)^{k}\mathbb{E}[X^{2k}]}{k!} \\
    & \leq \sum_{k=2}^{\infty} \frac{(6\lambda a)^{k}(2\sigma^{2})^{k}(2k)\Gamma(k)}{k!} \\
    & \leq 2\sum_{k=2}^{\infty} (12a\sigma^{2}\lambda)^{k} \\
    & \leq 2\sum_{k=2}^{\infty} \frac{1}{2^{k}} \quad \text{if } 12a\sigma^{2}\lambda \leq \frac{1}{2} \\
    & \leq 2 \quad \text{if } |\lambda| \leq \frac{1}{24a\sigma^{2}}.
\end{align*}
We now bound $I_1$ as follows:
\begin{align*}
    I_{2} & = \sum_{k=2}^{\infty} \frac{(6b\lambda)^{k}\mathbb{E}[|X|^{k}]}{k!} \\
    & \leq \sum_{q=1}^{\infty} \frac{(6b\lambda)^{2q+1}\mathbb{E}[|X|^{2q+1}]}{(2q+1)!} + \sum_{q=1}^{\infty} \frac{(6b\lambda)^{2q}\mathbb{E}[X^{2q}]}{(2q)!}  \\
    & \leq \sum_{q=1}^{\infty} \frac{ (6b\lambda)^{2q+1} (2\sigma^{2})^{q + \frac{1}{2}} (2q+1) \Gamma(q + \frac{1}{2})}{(2q+1)!} + \sum_{q=1}^{\infty} \frac{ (6b\lambda)^{2q} (2\sigma^{2})^{q} (2q) \Gamma(q)}{(2q)!} \\
    & \leq \sum_{q=1}^{\infty} \frac{(6\sqrt{2}b\sigma\lambda)^{2q+1}(2q+1)(2q)!\sqrt{\pi}}{(2q+1)!4^{q}q!} + 2 \sum_{q=1}^{\infty} \frac{(72b^2\sigma^2\lambda^2)^{q}q!}{(2q)!} \\
    & \leq (6\sqrt{2\pi}b\sigma\lambda) \sum_{q=1}^{\infty} \frac{(72b^2\sigma^2\lambda^2)^{q}}{q!} + 2 \sum_{q=1}^{\infty} \frac{q!}{2^{q}(2q)!} \quad \text{(if } 72b^{2}\sigma^{2}\lambda^{2} \leq \frac{1}{2}) \\
    & \leq (6\sqrt{2\pi}b\sigma\lambda)(e^{72b^{2}\sigma^{2}\lambda^{2}}- 1) + 2 \sum_{q=1}^{\infty} \frac{q!}{2^{q}(2q)!} \quad \text{(if } 72b^{2}\sigma^{2}\lambda^{2} \leq \frac{1}{2}) \\
    & \leq \frac{\sqrt{\pi} (\sqrt{e} - 1)}{\sqrt{2}} + 2 \sum_{q=1}^{\infty} \frac{q!}{2^{q}(2q)!} \quad \text{(if } 72b^{2}\sigma^{2}\lambda^{2} \leq \frac{1}{2}) \\
    &< 1 + 2 \sum_{q=1}^{\infty} \frac{q!}{2^{q}(2q)!} \quad \text{(if } 72b^{2}\sigma^{2}\lambda^{2} \leq \frac{1}{2}).
\end{align*}

To bound the second term above, consider $T_{q} = \frac{q!}{2^{q}(2q)!}$. $\frac{T{q+1}}{T_{q}} = \frac{1}{4(2q+1)}$. Then

\begin{align*}
    2\sum_{q=1}^{\infty} T_{q} & = 2(T_{1} + T_{2} +  T_{3} + ...)  \\
    & \leq 2(T_{1} + \frac{T_{1}}{4 \cdot 3} + \frac{T_{1}}{4^{2}\cdot 3 \cdot 5} + \frac{T_{1}}{4^{3}\cdot 3 \cdot 5 \cdot 7} + ...) \\
    & 2(\leq T_{1} + \frac{T_{1}}{4 \cdot 3} + \frac{T_{1}}{4^{3} \cdot 3} + \frac{T_{1}}{4^{5}\cdot 3} +  \frac{T_{1}}{4^{7}\cdot 3} + ...) \\
    & \leq 2T_{1} \Bigl (  1 + \frac{\frac{1}{4}}{3(1 - \frac{1}{16})} \Bigr) \\
    & \leq \frac{49}{90} < 1 \quad \text{(if } |\lambda| \leq \frac{1}{12b\sigma} ).
\end{align*}

We now bound $I_3$ as follows:
\begin{align*}
I_{3} & = \sum_{k=2}^{\infty} \frac{(6c\lambda)^{k}}{k!} \\
& = e^{6c\lambda} - 1 - 6c\lambda.
\end{align*}

Combining the bounds obtained for $I_{1}, I_{2}$ and $I_{3}$, we get
\begin{align}
\mathbb{E}[e^{\lambda(\phi(X-e^{*}) - \mathbb{E}[\phi(X-e^{*})])}] & \leq 1 + I_{1} + I_{2} + I_{3} \\
& \leq 1 + 2 + 1 + 1 + e^{6c\lambda} - 1 - 6c\lambda \label{eq:Lem3.2last} \\ 
& < \infty \quad \text{if } |\lambda| \leq \min\left(\frac{1}{24a\sigma^{2}}, \frac{1}{12b\sigma}\right).
\end{align}

We have showed that the zero mean r.v. $Y = \phi(X-e^{*}) - \mathbb{E}[\phi(X-e^{*})]$ is sub-exponential r.v. whenever $X$ is $\sigma^2$-sub-Gaussian. We also showed the existence of a $c_{0} = \min\left(\frac{1}{12L\sigma^{2}}, \frac{1}{12|Le^{*}-1|\sigma}\right)$ such that $\mathbb{E}[e^{\lambda( \phi(X-e^{*}) - \mathbb{E}[\phi(X-e^{*})])}] < \infty, \quad \forall |\lambda| \leq c_{0}$. \\

By the Chernoff Bound with $\lambda = \frac{c_{0}}{2}$, we have $P(Y \geq t) \leq \mathbb{E}[e^{\frac{c_{0}Y}{2}}]e^{-\frac{c_{0}t}{2}}$. Applying a similar argument to $-Y$, we can conclude that $P(|Y| \geq t) \leq c_{1}e^{-c_{2}t}$ with $c_{1} = \mathbb{E}[e^{\frac{c_{0}Y}{2}}] + \mathbb{E}[e^{\frac{-c_{0}Y}{2}}]$ and $c_{2} = \frac{c_{0}}{2}$. We can get an explicit bound for $c_1$ by setting $\lambda = \frac{c_0}{2}$ in \Cref{eq:Lem3.2last} to get $\mathbb{E}[e^{\frac{c_{0}Y}{2}}] \leq M$ where $M = 4 + e^{3|\frac{L(e^{*})^2}{2} - e^{*}|c_{0}} - 3|\frac{L(e^{*})^2}{2} - e^{*}|c_{0} $. By a parallel argument one can arrive at $\mathbb{E}[e^{\frac{-c_{0}Y}{2}}] \leq M$ so that $c_{1} \leq 2M = C_{1}$.

By a parallel argument to the proof of Lemma 55 of \cite{prashanth2022wasserstein}, we obtain

\begin{align*}
\mathbb{E}[|Y|^{k}] & = \int_{0}^{\infty} P(|Y|^k \geq u) \, du \\
& = \int_{0}^{\infty} P(|Y| \geq t ) k t^{k-1} \, dt \\
& \leq \int_{0}^{\infty} C_{1}e^{-c_{2}t} kt^{k-1} \, dt \\
& \leq  \int_{0}^{\infty} \frac{C_{1}k}{c_{2}^{k}} e^{-s} s^{k-1} \, ds \\
& = \frac{C_{1}k \Gamma(k)}{c_{2}^{k}} \\
& = \frac{1}{2} \Bigl(\frac{\sqrt{2C_{1}}}{c_{2}}\Bigr)^{2} k! (\frac{1}{c_{2}})^{k-2},
\end{align*}
which proves that $Y$ satisfies Bernstein's condition with parameters $\sigma^{2} = \frac{2C_{1}}{c_{2}^2}$ and $b = \frac{1}{c_{2}}$. 

We know from [p. 19, Chapter 2]\cite{wainwright2019high} that if $Y$ satisfies Bernstein's condition with parameters $(\sigma^{2},b)$ then it is also sub-exponential with parameters $(2\sigma^{2},2b)$. Thus we have that $\phi(X-e^{*}) - \mathbb{E}[\phi(X-e^{*})]$ is sub-exponential with parameters $\left(\frac{4C_{1}}{c_{2}^2},\frac{2}{c_{2}}\right)$, completing the proof. \\
\end{proof}

\section{Proof of \Cref{thm:oce-conc1}}

\begin{proof}
\label{proof:thm:oce-conc1}
   In Lemma \ref{lem:lsmooth-subg} we showed that $Y=\phi(X-e^{*}) - \mathbb{E}[\phi(X-e^{*})]$ is sub-exponential with parameters $(\frac{4C_{1}}{c_{2}^2},\frac{2}{c_{2}})$. 

Applying Bernstein's inequality to $Y$, we obtain

\begin{equation}
\mathbb P \Bigr( \Bigl| \phi(X - e^{*}) - \mathbb{E} [ \phi(X - e^{*}) ]\Bigr| \geq \epsilon \Bigl) \,\, \leq \,\, 2e^{ \frac{-c_{2}\epsilon^{2}}{2(2C_{1}+\epsilon)} }. \label{eq:bernstein} \\
\end{equation}

with $C_{1} = 2( 4 + e^{3|\phi(-e^{*})|c_{0}} - 3|\phi(-e^{*})|c_{0} )$, $c_{2} = \frac{c_{0}}{2}$ and $c_{0} = \min\left(\frac{1}{12L\sigma^{2}}, \frac{1}{12|\phi'(-e^{*})|\sigma}\right)$.

Using \Cref{eq:oce_broken_fin} derived during the proof of \Cref{thm:mse-oce}, we know
\begin{align}
    &\mathbb{P} \Bigl[ \left| \ocen - \oce(X) \right| > \epsilon \Bigr] \leq \nonumber\\
    &\quad \mathbb{P} \Bigl[ \left| \frac{1}{n} \sum_{i=1}^{n} \phi(X_{i} - e^{*}) - E(X_{i} - e^{*}) \right| > \epsilon/2 \Bigr] + P \Bigl[ \frac{3L}{2}(e^{*} - \hat{e}_{n} )^{2} > \epsilon/2 \Bigr] 
    \label{eq:oce_prob_broken}
\end{align}

The first term on the RHS of \Cref{eq:oce_prob_broken} is bounded using \Cref{eq:bernstein} as follows:
\begin{equation}
    \mathbb{P} \Bigl[ \left| \frac{1}{n} \sum_{i=1}^{n} \phi(X_{i} - e^{*}) - E(X_{i} - e^{*}) \right| > \frac{\epsilon}{2} \Bigr] \leq 2e^{ \frac{-c_{2}n\epsilon^{2}}{4(4c_{1}+\epsilon)}}
\end{equation}

The second term on the RHS of \Cref{eq:oce_prob_broken} can be bounded using the result derived in \Cref{thm:estar-conc}.

Combining the bounds on the aforementioned two terms, we obtain
\begin{equation}
\mathbb{P} \left[ \left| \text{oce}_{n}^{\phi} - \text{oce}^{\phi}(X) \right| > \epsilon \right] \leq 2e^{ \frac{-c_{2}n\epsilon^{2}}{4(4c_{1}+\epsilon)}} + 2e^{-\frac{\mu^{2}n\epsilon}{24L^{3}\sigma^{2}}}.
\end{equation}
Hence proved.\end{proof}

\section{Proof of \Cref{thm:bach-t_star}}

\begin{proof}
\label{proof:thm:bach-t_star}
   We rewrite below the update iteration \eqref{eq:sto-approx} in a form that is amenable to the application of a result from \cite{moulines2011long}
\begin{align}
t_j = t_{j-1} - \gamma_j f'_j(t_{j-1}) \quad \forall j \geq 1,\forall t_0 \in \mathbb{R},   \label{eq:stoapprox-rewrite}
\end{align}
where $X_{1},...,X_{n}$ are i.i.d r.v 's sampled during $m$ iterations of SGD. 

Define
\begin{align*}
    f(t) &= \mathbb{E} \left[ \phi(X-t) \right] + t, \,\,
    f'(t) = 1 - \mathbb{E} \left[ \phi'(X-t) \right], \\
    f_{j}(t) &= \phi(X_{j}-t) + t, \,\,
    f_{j}'(t) = 1 - \phi'(X_{j}-t).
\end{align*}
Then \Cref{thm:bach-t_star} can be proved by applying \cite[Thm. 3]{moulines2011long} which is subject to certain assumptions $(H_1, H_2’, H_3, H_4, H_6, H_7)$ of \cite{moulines2011long}. For the sake of completeness, we include them here, restating them for the scalar case. 

\begin{assumptionB}
    \label{ass:bach1}
    Let $(F_j)_{j>0}$ be an increasing family of $\sigma$-fields. $t_{0}$ is $F_0$-measurable, and for each $t \in \mathbb{R}$, the r.v. $f'_j(t)$ is square-integrable, $F_j$-measurable, and for all $t \in H$, $j > 1$, $E(f'_j(t) | F_{j-1}) = f'(t)$, with probability $1$.
\end{assumptionB}

\begin{assumptionB}
    \label{ass:bach2}
     For each $n > 1$, the function $f_n$ is almost surely convex, differentiable with Lipschitz-continuous gradient $f'_n$, with constant $L$, that is:
\[
\forall j > 1, \forall t_1, t_2 \in \mathbb{R}, \left| f'_j(t_1) - f'_j(t_2) \right| \leq L \left| t_1 - t_2 \right| \text{ w.p.1.}
\]
\end{assumptionB}

\begin{assumptionB}
    \label{ass:bach3}
     The function \( f \) is strongly convex, with convexity constant \( \mu > 0 \). That is, for all \( t_1, t_2 \in \mathbb{R} \), 
\[ f(t_1) \geq f(t_2) + f'(t_2)(t_1 - t_2)  + \frac{\mu}{2} (t_1 - t_2)^2. \]
\end{assumptionB}

\begin{assumptionB}
    \label{ass:bach4}
     There exists \( \sigma^2 \in \mathbb{R}^+ \) such that for all \( j > 1 \), \( \mathbb{E}[(f'_j(e^*)^2 \,|\, \mathcal{F}_{j-1}) \leq \sigma^2 \), w.p.1.
\end{assumptionB}

\begin{assumptionB}
    \label{ass:bach5}
     For each $n > 1$, the function $f_n$ is almost surely twice differentiable with Lipschitz-continuous second derivative $f_n''$, the Lipschitz constant being $M$. That is, for all $t_1, t_2 \in \mathbb{R}$ and for all $n > 1$,
\[ |f_n''(t_1) - f_n''(t_2)|  \leq M |t_1 - t_2|,\] (w.p 1).
\end{assumptionB}

\begin{assumptionB}
    \label{ass:bach6}
     There exists $\tau \in \mathbb{R}_+$, such that for each $j > 1$,
\[E(f_j'(e^{*})^4 | F_{j-1}) \leq \tau^4 \quad \text{almost surely}.\]
\end{assumptionB}

Now we show that the assumptions \Crefrange{ass:phi-stronglyconvex}{ass:DCT} and \ref{ass:b_combined} imply \ref{ass:bach1}--\ref{ass:bach6} hold for the update \eqref{eq:stoapprox-rewrite}. 

\begin{description}
    \item[\ref{ass:bach1}:] Holds by definition of $X_{j}$,$f_{j}'$ and $f$.
    \item[\ref{ass:bach2}:] Holds from the lipschitzness of $\phi'$.
    \item[\ref{ass:bach3}:] Consider $g(t) = \mathbb{E} \left[ \phi(X-t) \right]$. If this is strongly convex so is $g(t) + t$. By strong convexity of $\phi$,
    \begin{align*}
    &\phi(X-t_{2}) \geq \phi(X-t_{1}) + \phi'(X-t_{1})(t_{1}-t_{2}) + \frac{\mu}{2} (t_{1}-t_{2})^{2}\\
    &\Rightarrow \mathbb{E} \left[ \phi(X-t_{2}) \right] \geq \mathbb{E} \left[ \phi(X-t_{1}) \right] + \mathbb{E} \left[ \phi'(X-t_{1}) \right] (t_{1}-t_{2}) + \frac{\mu}{2} (t_{1}-t_{2})^{2}\\
    &\Rightarrow g(t_{2}) \geq g(t_{1}) + g'(t_{1})(t_{2}-t_{1}) + \frac{\mu}{2} (t_{1}-t_{2})^{2} 
    \end{align*}
    \item[\ref{ass:bach4}:] Holds trivially from \ref{ass:b_combined}.
    \item[\ref{ass:bach5}:] Follows from \ref{ass:b1}.
    \item[\ref{ass:bach6}:] This follows from \ref{ass:b_combined}, after noting that the samples are i.i.d. in our setting.
\end{description}
The proof of the theorem then follows by directly applying [Theorem 3]\cite{moulines2011non} in its scalar version to the $f,f',f_{j},f_{j}'$ defined earlier. 

Note that the first term in the bound from Theorem 3 of \cite{moulines2011long} is the dominant one. For our choice of $\alpha \in (0.5,1)$, it can be shown that all the others terms in the aforementioned bound are asymptotically smaller than the first one, and so they can all be represented in the form of $\frac{c_{i}}{\sqrt{m}}$. Also, the first term in scalar form is $\sigma^2f''(e^{*})^{-2} = \sigma^2 \EE{\phi''(X-e^{*})}^{-2} \leq \frac{\sigma^2}{\mu^2}$. 
\end{proof}

\section{Proof of \Cref{thm:exp_oce}}

\begin{proof}
\label{proof:thm:exp_oce}
Using the definitions of OCE risk and its estimator in \eqref{eq:oce-sgd}, we have
\begin{align}
&\ocesa (X) - \oce (X) \\ 
&= (\bar{t}_{m} - e^{*}) + \frac{\sum_{i=1}^{m} \phi(X_{i} - \bar{t}_{m})}{m} - \mathbb{E} \left[ \phi(X - e^{*}) \right] \nonumber \\
&= (\bar{t}_{m} - e^{*}) + \frac{\sum_{i=1}^{m} \phi(X_{i} - \bar{t}_{m}) - \sum_{i=1}^{m} \phi(X_{i}-e^{*})}{m} + \frac{\sum_{i=1}^{m} ( \phi(X_{i}-e^{*}) - \mathbb{E} \left[ \phi(X_{i} - e^{*}) \right] )}{m}.
\label{eq:oce-sgd-diff}
\end{align}
The second term on the RHS of \eqref{eq:oce-sgd-diff} can be bounded using $L$-smoothness as follows:
\begin{align*}
   &\frac{-L}{2} ( \bar{t}_{m} - e^{*})^{2} + \phi'(X-e^{*})(e^{*}-\bar{t}_{m})  \leq \\ &\qquad\qquad\qquad\phi(X - \bar{t}_{m}) - \phi(X - e^{*})  \leq \frac{L}{2} ( \bar{t}_{m} - e^{*})^{2} + \phi'(X-e^{*})(e^{*}-\bar{t}_{m}).
\end{align*}
Since the two-sided inequality above holds for any $X_i$, we obtain
\begin{align}
\frac{-L}{2} ( \bar{t}_{m} - e^{*})^{2} + (e^{*}-\bar{t}_{m}) \frac{\sum_{i=1}^{m} \phi'(X_{i}-e^{*})}{m} & \leq \\ \frac{\sum_{i=1}^{m} \phi(X_{i} - \bar{t}_{m}) - \sum_{i=1}^{m} \phi(X_{i}-e^{*})}{m} & \leq \\ \frac{L}{2} ( \bar{t}_{m} - e^{*})^{2} + (e^{*}-\bar{t}_{m}) \frac{\sum_{i=1}^{m} \phi'(X_{i}-e^{*})}{m}. \label{eq:e1}
\end{align}
Substituting \eqref{eq:e1} into \eqref{eq:oce-sgd-diff}, we obtain
\begin{align*}
 &\ocesa (X) - \oce (X)  \\
 &\qquad\leq \frac{L}{2}(\bar{t}_{m} - e^{*})^{2} + (\bar{t}_{m} - e^{*})\Bigl(1-\frac{\sum_{i=1}^{m} \phi'(X_{i}-e^{*})}{m}\Bigr) + \frac{\sum_{i=1}^{m} ( \phi(X_{i}-e^{*}) - \mathbb{E} \left[ \phi(X_{i} - e^{*}) \right] )}{m}.
\end{align*}
Using the triangle inequality, we have
\begin{align*}
 &\ocesa (X) - \oce (X)  \\
 &\quad\leq \frac{L}{2}(\bar{t}_{m} - e^{*})^{2} + \Bigl|(\bar{t}_{m} - e^{*})\Bigl(1-\frac{\sum_{i=1}^{m} \phi'(X_{i}-e^{*})}{m}\Bigr) \Bigr| + \Bigl| \frac{\sum_{i=1}^{m} ( \phi(X_{i}-e^{*}) - \mathbb{E} \left[ \phi(X_{i} - e^{*}) \right] )}{m} \Bigr|. \numberthis\label{eq:1}
\end{align*}
Using similar arguments, it can be shown that
\begin{align*}
 &\oce (X) - \ocesa (X) \\&\leq \frac{L}{2}(\bar{t}_{m} - e^{*})^{2} + (e^{*}-\bar{t}_{m})\Bigl(1-\frac{\sum_{i=1}^{m} \phi'(X_{i}-e^{*})}{m}\Bigr) -  \frac{\sum_{i=1}^{m} ( \phi(X_{i}-e^{*}) - \mathbb{E} \left[ \phi(X_{i} - e^{*}) \right] )}{m}.
\end{align*}
As before, using the triangle inequality, we obtain
\begin{align*}
 &\oce (X) - \ocesa (X) \leq \frac{L}{2}(\bar{t}_{m} - e^{*})^{2} \\
 &\qquad+ \Bigl|(\bar{t}_{m} - e^{*})\Bigl(1-\frac{\sum_{i=1}^{m} \phi'(X_{i}-e^{*})}{m}\Bigr) \Bigr| + \Bigl| \frac{\sum_{i=1}^{m} ( \phi(X_{i}-e^{*}) - \mathbb{E} \left[ \phi(X_{i} - e^{*}) \right] )}{m} \Bigr|. \numberthis\label{eq:2}
\end{align*}
It is evident from \eqref{eq:1} and \eqref{eq:2} that 
\begin{align*}
    |\ocesa (X) - \oce (X)| &\leq \frac{L}{2}(\bar{t}_{m} - e^{*})^{2} + \Bigl|(\bar{t}_{m} - e^{*})\Bigl(1-\frac{\sum_{i=1}^{m} \phi'(X_{i}-e^{*})}{m}\Bigr) \Bigr| \\
    &\qquad + \Bigl| \frac{\sum_{i=1}^{m} ( \phi(X_{i}-e^{*}) - \mathbb{E} \left[ \phi(X_{i} - e^{*}) \right] )}{m} \Bigr|,
\end{align*}
implying
\begin{align*}
    \EE{|\ocesa (X) - \oce (X)|} &\leq \frac{L}{2}\EE{(\bar{t}_{m} - e^{*})^{2}}  \\
    &\quad+  \EE{\Bigl|(\bar{t}_{m} - e^{*})\Bigl(1-\frac{\sum_{i=1}^{m} \phi'(X_{i}-e^{*})}{m}\Bigr) \Bigr| } \\ &\quad+ \EE{\Bigl| \frac{\sum_{i=1}^{m} ( \phi(X_{i}-e^{*}) - \ \mathbb{E} \left[ \phi(X_{i} - e^{*}) \right] )}{m} \Bigr|}. \numberthis\label{eq:3}
\end{align*}

The first term of  \Cref{eq:3} is bounded above by $\frac{L \K^2}{2m}$ from the result derived in \Cref{thm:bach-t_star}. To bound the second term, we invoke the Cauchy-Schwarz inequality and simplify as follows:
\begin{align*}
\EE{(\bar{t}_{m} - e^{*})\left(1-\frac{\sum_{i=1}^{m} \phi'(X_{i}-e^{*})}{m}\right)} \leq \left(\EE{(\bar{t}_{m} - e^{*})^2}\EE{\left(1-\frac{\sum_{i=1}^{m} \phi'(X_{i}-e^{*})}{m}\right)^2}\right)^{1/2}.
\end{align*}
Using \Cref{thm:bach-t_star}, the first term on the RHS is bounded above $\K^2/m$. To bound the second term, notice that
\begin{align*}
    \EE{\Bigl(1-\frac{\sum_{i=1}^{m} \phi'(X_{i}-e^{*})}{m}\Bigr)^2} &\leq \EE{1 - \frac{2\sum_{i=1}^{m} \phi'(X_{i}-e^{*})}{m} + \frac{(\sum_{i=1}^{m} \phi'(X_{i}-e^{*}))^2}{m^2} } \\ &\leq 1 - 2 + \frac{ m(\var{(\phi'(X-e^{*}))}+1) + (m^2 - m) }{m^2} \\ &\leq -1 + \frac{\var{(\phi'(X-e^{*}))}}{m} + 1 \\ &\leq \frac{\var{(\phi'(X-e^{*}))}}{m},
\end{align*}
since $\EE{\phi'(X-e^{*})} = 1$, and $\var{(\phi'(X-e^{*}))} = \EE{(\phi'(X-e^{*})^2} - 1$.

From the foregoing, we have
\begin{equation}
\EE{(\bar{t}_{m} - e^{*})\Bigl(1-\frac{\sum_{i=1}^{m} \phi'(X_{i}-e^{})}{m}\Bigr)} \leq \frac{\K\sqrt{\var{(\phi'(X-e^{*}))}}}{m}.
\end{equation}
We now bound the third term of \Cref{eq:3} using \Cref{ass:b_combined} and Jensen's inequality as follows:
\begin{align*}
&\mathbb{E}\left|\frac{\sum_{i=1}^{m} \left(\phi(X_{i}-e^{*}) - \mathbb{E}[\phi(X_{i} - e^{*})]\right)}{m}\right|\\
&\leq \frac{1}{m} \sqrt{\sum_{i=1}^{m} \EE{ (\phi(X_{i}-e^{*}) - \mathbb{E}[\phi(X_{i} - e^{*})])^2}} \leq \frac{\sqrt{\var{(\phi(X-e^{*}))}}}{\sqrt{m}}.
\end{align*}
Substituting the bounds on the three terms on the RHS of \eqref{eq:3}, we obtain
\begin{equation*}
\EE{|\ocesa(X) - \oce(X)|} \leq \frac{L \K^2}{2m} + \frac{\K\sqrt{\var{(\phi'(X-e^{*}))}}}{m} + \frac{\sqrt{\var{(\phi(X-e^{*}))}}}{\sqrt{m}}.
\end{equation*}
Hence proved.
\end{proof}

\section{Proof of \Cref{thm:oce-algo}}

\begin{proof}
\label{proof:thm:oce-algo}
The proof closely follows a similar claim for optimizing CVaR in a best arm identification framework, see Theorem 3.6 in \cite{prashanth2020concentration}.

Consider \Cref{eq:oce_prob_broken}. Since $\phi(X_{i} - e^{*}) - E(X_{i} - e^{*})$ is sub-exponential via \Cref{lem:lsmooth-subg}, invoking [Proposition 2.2]\cite{wainwright2019high}, the first term can be bounded as,

\[
P \Bigl[ | \frac{1}{n} \sum_{i=1}^{n} \phi(X_{i} - e^{*}) - E(X_{i} - e^{*}) | > \epsilon/2 \Bigr] \leq
\begin{cases}
    2e^{-\frac{n\epsilon^2c_2^2}{32c_1}} & \text{if } 0 \leq \epsilon \leq \frac{4c_1}{c_2} \\
    2e^{-\frac{c_2n\epsilon}{8}} & \text{if } \epsilon > \frac{4c_1}{c_2}
\end{cases}
\]

The second term is bounded using the result derived in \Cref{thm:estar-conc},

\begin{equation}
P \left( \frac{3L}{2} (e^{*} - \hat{e}_{n} )^{2} > \frac{\epsilon}{2} \right) \leq 2e^{-\frac{\mu^{2}n\epsilon}{24L^{3}\sigma^{2}}}
\end{equation}

Combine both the bounds to get an upper bound on the OCE-estimate as:

\begin{equation}
    P \left[ \left| \text{oce}_{n}^{\phi} - \text{oce}^{\phi}(X) \right| > \epsilon \right] \leq 
    \begin{cases}
    2e^{-\frac{n\epsilon^2c_2^2}{32C_1}} + 2e^{-\frac{\mu^{2}n\epsilon}{24L^{3}\sigma^{2}}} & \text{if } 0 \leq \epsilon \leq \frac{4C_1}{c_2} \\
    2e^{-\frac{c_2n\epsilon}{8}} + 2e^{-\frac{\mu^{2}n\epsilon}{24L^{3}\sigma^{2}}} & \text{if } \epsilon > \frac{4C_1}{c_2}
\end{cases}
\label{eq:oce_split}
\end{equation}

Letting $G = \min \left\{ \frac{c_{2}^2}{32C_{1}}, \frac{c_{2}}{8}, \frac{\mu^2}{24L^{3}\sigma^2} \right\}$, the bound can be written in a simplified form as:

\begin{align}
\mathbb{P}\left[\left| \text{oce}_{n}^{\phi} - \text{oce}^{\phi}(X)  \right|>\epsilon\right] \leq 4 e^{-n \min \left\{\epsilon, \epsilon^{2}\right\}G} 
\label{eq:oce_split_simplified}
\end{align}

If the OCE-SR algorithm eliminates the optimal arm in phase \( i \), it means that at least one of the arms in the set of the last \( i \) worst arms, denoted as \( \{[K],[K-1], \ldots,[K-i+1]\} \), must not have been eliminated in phase \( i \). Therefore, we conclude that:

\begin{align*}
& \mathbb{P}\left[J_{n} \neq i^{*}\right] \leq \sum_{k=1}^{K-1} \sum_{i=K+1-k}^{K} \mathbb{P}\left[\hat{oce}_{n_{k}}^{i^{*}} \geq \hat{oce}_{n_{k}}^{[i]}\right] \\
& =\sum_{k=1}^{K-1} \sum_{i=K+1-k}^{K} \mathbb{P}\left[\hat{oce}_{n_{k}}^{i^{*}}-oce_{\phi}^{i^{*}}-\hat{oce}_{n_{k}}^{[i]}+oce_{\phi}^{[i]} \geq oce_{\phi}^{[i]}-oce_{\phi}^{i^{*}}\right] \\
& \leq \sum_{k=1}^{K-1} \sum_{i=K+1-k}^{K} \mathbb{P}\left[\hat{oce}_{n_{k}}^{i^{*}}-oce_{\phi}^{i^{*}} \geq \frac{\Delta_{[i]}}{2}\right]+\sum_{k=1}^{K-1} \sum_{i=K+1-k}^{K} \mathbb{P}\left[oce_{\phi}^{[i]}-\hat{oce}_{n_{k}}^{[i]} \geq \frac{\Delta_{[i]}}{2}\right] 
\label{eq:oce_bandit_maineqn}
\end{align*}

We now bound the above terms individually as follows.

\begin{align}
& \sum_{k=1}^{K-1} \sum_{i=K+1-k}^{K} \mathbb{P}\left[oce_{\phi}^{[i]}-\hat{oce}_{n_{k}}^{[i]} \geq \frac{\Delta_{[i]}}{2}\right] \leq \sum_{k=1}^{K-1} \sum_{i=K+1-k}^{K} \mathbb{P}\left[\left|\hat{oce}_{n_{k}}^{[i]}-oce_{\phi}^{[i]}\right| \geq \frac{\Delta_{[i]}}{2}\right] \\
& \overset{(a)}{\leq} \sum_{k=1}^{K-1} \sum_{i=K+1-k}^{K-1} 4 e^{-n \min \left\{\frac{\Delta_{[i]}}{2} \frac{\Delta_{[i]}^{2}}{4}\right\} G_{[i]}} \\
& \leq \sum_{k=1}^{K-1} \sum_{i=K+1-k}^{K} 4e^{ -n\min \left\{\frac{\Delta_{[i]}}{2}, \frac{\Delta_{[i]}^{2}}{4}\right\} G_{\max }} \\
& \leq \sum_{k=1}^{K-1} 4k e^{-n \min \left\{\frac{\Delta_{[K+1-k]}}{2}, \frac{\Delta_{[K+1-k]}^{2}}{4}\right\} \times G_{\max }} 
\label{eq:oce_bandit_eqn}
\end{align}

where $(a)$ is due to \Cref{eq:oce_split} and \Cref{eq:oce_split_simplified} , and $G_{\max }=\max _{i} G_{i}$. Further, note that

$$
n \min \left\{\frac{\Delta_{[K+1-k]}}{2}, \frac{\Delta_{[K+1-k]}^{2}}{4}\right\} \geq \frac{n-K}{H \overline{\log } K}
$$

where $H=\max _{i \in\{1,2 \ldots, K\}} \frac{i}{\min \left\{\Delta_{[i]} / 2, \Delta_{[i]}^{2} / 4\right\}}$. By substituting the above in \Cref{eq:oce_bandit_eqn}, we obtain

\begin{align}
\sum_{k=1}^{K-1} \sum_{i=K+1-k}^{K} \mathbb{P}\left[oce_{\phi}^{[i]}-\hat{oce}_{n_{k}}^{[i]} \geq \frac{\Delta_{[i]}}{2}\right] \leq \sum_{k=1}^{K-1} 4 k e^{-\frac{(n-K)G_{\max }}{H \overline{\log } K}}
\label{eq:oce_bandit_1}
\end{align}

Similarly, it can be shown that

\begin{align}
\sum_{k=1}^{K-1} \sum_{i=K+1-k}^{K} \mathbb{P}\left[\hat{oce}_{n_{k}}^{i^{*}}-oce_{\phi}^{i^{*}}-\geq \frac{\Delta_{[i]}}{2}\right] \leq \sum_{k=1}^{K-1} 4 k e^{-\frac{(n-K)G_{\max }}{H \overline{\log } K}}.
\label{eq:oce_bandit_2}
\end{align}

The theorem follows by substituting \Cref{eq:oce_bandit_1} and \Cref{eq:oce_bandit_2} in \Cref{eq:oce_bandit_maineqn}.

\end{proof}

\end{document}